\documentclass[10pt]{article}

\usepackage[accepted]{icml2026}

\usepackage[
    savespace,
    shorthands,
    version=3,
]{preamble}

\renewcommand{\adri}[1]{}
\renewcommand{\av}[1]{}

\colorlet{eqncolor}{.}
\newcommand{\eqnco}[2][eqncolor]{\begingroup\color{#1}#2\endgroup}

\usepackage{xpatch}
\makeatletter
\xpatchcmd{\proof}{\topsep6\p@\@plus6\p@\relax}{}{}{}
\makeatother

\definecolor{tabgreen}{rgb}{0.17254901960784313, 0.6274509803921569, 0.17254901960784313}
\colorlet{manifoldcolor}{tabgreen}

\let\oldtextbf\textbf
\renewcommand{\textbf}[1]{\oldtextbf{\color{firstcolor} #1}}
\let\figbf\textbf
\let\oldparagraph\paragraph
\renewcommand{\paragraph}[1]{\oldparagraph{\color{firstcolor} #1}}

\newcommand{\parnote}[1]{\textcolor{secondcolor}{\{#1\}}}
\WithSuffix\newcommand\parnote*[1]{} %

\begin{document}
	\twocolumn[
		\icmltitle{An Embarrassingly Simple Way to Optimize Orthogonal Matrices at Scale}

		\begin{icmlauthorlist}
			\icmlauthor{Adri\'an Javaloy}{edi}
			\icmlauthor{Antonio Vergari}{edi}
		\end{icmlauthorlist}
		
		\icmlaffiliation{edi}{Institute for Machine Learning, University of Edinburgh, GB}
		
		\icmlcorrespondingauthor{Adri\'an Javaloy}{ajavaloy@ed.ac.uk}
		\icmlcorrespondingauthor{Antonio Vergari}{avergari@ed.ac.uk}
		
		\icmlkeywords{Machine Learning, ICML}
		
		\vskip 0.3in
	]

	\printAffiliationsAndNotice{}  %
	
    \doparttoc %
	\faketableofcontents %

\begin{abstract}
    Orthogonality constraints are ubiquitous  in  robust and probabilistic machine learning. %
    Unfortunately, current optimizers are computationally expensive and do not scale to problems with hundreds or thousands of constraints.
    One notable exception is the Landing algorithm \citep{ablin2024infeasible} which, however comes at the expense of temporarily relaxing orthogonality.
	In this work, we revisit and improve on the ideas behind Landing, %
    enabling the inclusion of modern adaptive optimizers while ensuring that orthogonal constraints are effectively met. %
    Remarkably, these improvements come at little to no cost, 
    and reduce the number of required hyperparemeters. %
    Our algorithm \ours %
    is fast and  GPU-friendly, \emph{consisting of only \num{5} matrix products}, and in practice maintains orthogonality at all times. %
    On several challenging benchmarks, \ours greatly outperforms recent optimizers and shows it can optimize  problems with thousands of orthogonal matrices in minutes while alternatives would take hours.
    As such, \ours sets a milestone  to finally exploit orthogonality constraints in ML at scale. %
    A public PyTorch implementation of \ours is available at \hbox{\href{https://github.com/adrianjav/pogo}{github.com/adrianjav/pogo}}.
\end{abstract}

\FloatBarrier

\section{Introduction} \label{sec:intro}

\begin{figure}[t]
    \centering
    \includegraphics[width=.95\linewidth]{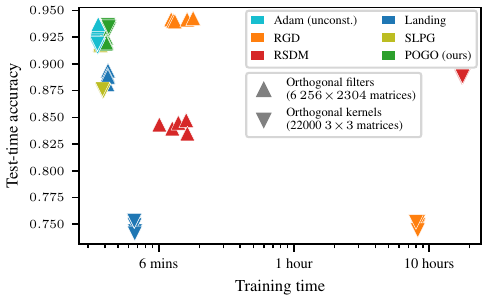}
    \caption{\figbf{\Ours optimizes thousands of orthogonal matrices orders of magnitude faster than retraction methods while achieving performance comparable to unconstrained optimizers}, as shown in this CIFAR-10 \citep{krizhevsky2009learning} classification problem with a tailored CNN \citep{jordan202494} parameterized with orthogonal filters or  kernels. %
    While RSDM \citep{han2025efficient} takes \textbf{17 hours} to train on average, \textbf{\ours trains in 3 minutes}.
    }
    \label{fig:fig1}
\end{figure}

\parnote*{Importance of Stiefel optimization with applications.}

Orthogonality constraints are common in classic machine learning (ML) tasks such as independent and principal component analysis %
\citep{bishop2006pattern,hyvarinen2001independent}.
This is also the case
in modern ML, where orthogonality %
plays a crucial role in ensuring the training stability of large recurrent networks \citep{arjovsky2016unitary,kiani2022projunn}, reducing gradient conflict in multitask learning \citep{javaloy2021rotograd}, or fostering filter diversity in convolutional neural networks (CNNs) \citep{wang2020orthogonal}, vision transformers (ViTs) \citep{fei2022vit} and enabling efficient squared probabilistic circuits \citep{loconte2025square}.
\parnote*{Why it is difficult to do + Discuss prior works in broad.}%
While %
methods such as Riemannian gradient descent (RGD) \citep{absil2008optimization} are staples for the classic ML tasks mentioned before, they struggle on modern problems that involve not just one orthogonal matrix, but several (and potentially large) matrices---from dozens to thousands---that interact between each other during training \citep{li2019orthogonal,fei2022vit,loconte2025square}, see \cref{fig:fig1}.

\NewDocumentCommand{\desideratum}{m o}{\ac[\IfValueTF{#2}{#2}{\textbf{D#1}}]{d#1}}

This begs the question: How should we design an optimizer for orthogonality constraints, from here on \textit{orthoptimizer} for short, to successfully handle modern large-scale ML problems?
We argue that such an orthoptimizer should be: 
\acdef[d1]{\textbf{D1)}}~{\textbf{\emph{feasible}}}, %
ensuring that the optimized matrices are orthogonal, up to numerical precision;
\acdef[d2]{\textbf{D2)}}~{\textbf{\emph{scalable}}}, %
being able to optimize a large number of matrices %
with minimal overhead \wrt* unconstrained deep learning (DL) optimizers; and
\acdef[d3]{\textbf{D3)}}~{\textbf{\emph{competitive}}}, achieving the best downstream performance possible within the given constraints.
Clearly, obtaining all desiderata \textbf{D1-3} is challenging, as optimizing for one might trade-off the others,
\eg to enhance scalability (\desideratum2), one often has to relax orthogonality (\desideratum1) with the promise to recover it ``at the end'' of optimization.

\parnote*{The Landing family scales but leaves the manifold.}

One remarkable example of the above is %
\textit{Landing} \citep{ablin2022fast} and its variants \citep{ablin2024infeasible,vary2024optimization,song2025distributed,loconte2025square} which, instead of relying on costly retractions in a classic Riemannian fashion, use gradient descent over a modified objective (the \emph{landing field}), 
requiring only cheap matrix multiplications (\desideratum2).
As a result, however,
Landing %
makes some compromises. %
First, orthogonality constraints are relaxed during optimization (\desideratum1), allowing matrices to temporarily violate them until they eventually \emph{land} back to the space of orthogonal matrices.
Yet, as we show in \cref{subsec:exp-nns}, %
if several matrices are simultaneously optimized it is uncertain \emph{when} this landing will happen.
Second, Landing methods rely entirely on SGD-like updates and therefore miss on techniques like adaptive gradients and momentum that modern optimizers such as Adam \citep{kingma2014adam} or Muon \citep{jordan2024muon} bring to DL optimization (\desideratum3).

In fact, when we measure the raw downstream performance (\eg, classification accuracy) of models with orthogonality constraints trained with Landing---but also with other SoTA orthoptimizers such as RSDM \citep{han2025efficient}---they consistently lag behind  unconstrained models trained with unconstrained but adaptive optimizers, as shown in \cref{fig:fig1}.
\textit{This highlights an existing gap between theory and practice}:
On the one hand, %
orthogonality %
should benefit model robustness and stability, as argued above, but on the other hand this promise is never truly met as training constrained models %
is harder, slower and yields subpar performance.

\paragraph{Contributions.} 
In this paper, we propose an orthoptimizer that substantially reduces this gap, as it proves to be as competitive as Adam on challenging DL benchmarks~(\desideratum3) with thousands of orthogonal matrices, while being orders-of-magnitude faster than retraction methods (\desideratum2) and closely respecting orthogonality constraints (\desideratum1), as shown in \cref{fig:distance-cnn}. 
Our \textbf{\textit{\ourslong (\ours)}} %
extends Landing %
in several ways.
First, \ours can %
wrap unconstrained optimizers and leverage ``optimization tricks'' like adaptive momentum (\cref{subsec:add-optims}).
Similar to Landing, \ours %
requires only matrix multiplications and therefore scales gracefully on the GPU.
In contrast to Landing, however, \ours %
is always within an epsilon of the orthogonal manifold under mild conditions, as it %
can compute exactly the optimal step size to land back on the manifold (\cref{subsec:compute-lambda}).
Furthermore, we prove that under well-behaved gradient norms we can approximate said step size by a constant value (\cref{subsec:approximate-lambda}).
Finally, we empirically show (\cref{sec:experiments}) that \ours yields SoTA %
downstream performance, while being orders-of-magnitude faster than retraction methods and closely respecting orthogonality constraints in a number of benchmarks.

\vspace{-.75em}

\section{Optimization on the Stiefel Manifold}

\parnote*{Equation with an opt. problem where we want to optimize a (large) set of Stiefel matrices.}

In this work, we look at the following problem:
\begin{align} \label{eq:problem-statement}
    &\minimize_{\mX_1, \dots, \mX_L} \; \mathcal{L}({\mX_1, \dots, \mX_L}) \\
    &\quad \text{s.t.} \; \mX_\indexone \mX_\indexone^\top = \vI_{p_\indexone} \; \text{for}\; i = \range{L}  \nonumber
\end{align}
where $\mathcal{L}$ is any given differentiable %
function and 
each of the $L$ matrices $\mX_\indexone \in \sR^{p_\indexone \times n_\indexone}$ is a wide matrix ($p_\indexone \leq n_\indexone$) taking values on the real numbers.\footnote{We assume the reals for simplicity, but all derivations can be easily extended to other fields like the complex numbers.} %
In practice, $\mX_\indexone$ can be the attention matrix of a transformer \citep{fei2022vit}, the weights of a normalizing flow \citep{GoliskiImprovingNF} or recurrent network \citep{arjovsky2016unitary}, or the filters of a CNN \citep{wang2020orthogonal}.
As a result, $L$ can go up to hundreds if we, \eg, consider orthogonal filters on a ResNet-110 \citep{bansal2018can} or even several thousands if one uses orthogonal kernels \citep{ozay2016optimization}.

\parnote*{What is a Riemannian manifold in plain terms}

A common way to solve \cref{eq:problem-statement} is to optimize over a Riemannian manifold \citep{absil2008optimization}. 
\parnote*{What is our Stiefel manifold (and our convention!)}
In particular, we denote by $\Stiefel{p}{n}$ the manifold of \emph{row-orthogonal} %
matrices, with $p \leq n$, %
known as the Stiefel manifold \citep{stiefel1935richtungsfelder}, %
\begin{equation}
    \Stiefel{p}{n} \coloneqq \left\{ \mX \in \sR^{p \times n} \mid \mX\mX^\top = \mI_p \right\} \eqp{,}
\end{equation}
and consider here for simplicity the Euclidean metric %
on the Stiefel manifold, \ie, the metric induced by the Frobenius inner product in the ambient space,
\begin{equation}
    \innerp{\mX}{\mY} \coloneqq \trace(\mY^\top \mX) \quad \text{for}\; \mX, \mY \in \sR^{p \times n} \eqp{.}    
\end{equation}

Given $\mX \in \Stiefel{p}{n}$, we can define its tangent space, \tangent, which %
represents a hyperplane tangent to $\mX$ %
where the (Riemannian) gradients \wrt $\mX$ reside. 
Furthermore, it can be shown that a matrix $\mA \in \tangent$ if and only if it can be written as $\mA = \mX \mS$ where $\mS$ is a skew-symmetric matrix \citep{edelman1998geometry,ablin2024infeasible}.

\parnote*{Explain Euclidean $\rightarrow$ relative $\rightarrow$ Riemannian gradient}

As described by \citet{ablin2022fast}, we can always compute the Riemannian gradient $\grad$ %
from the Euclidean one as follows: \itemi~map $\nabla f(\mX)$ to its relative gradient \citep{cardoso2002equivariant} by $\mS\!\coloneqq\!\Skew(\mX^\top\nabla f(\mX))$, where $\Skew(\mA) = 1/2 (\mA\!-\!\mA^\top)$ is the skew operator; and \itemii~compute the Riemannian gradient %
by left-multiplying $\mS$ with $\mX$, \ie, $\grad = \mX\mS \in \tangent$\eqp{.} %

\parnote*{The exponential map and Riemannian gradient descent}

Finally, the exponential function \hbox{$\operatorname{Exp}_{\mX}: \tangent \rightarrow \Stiefel{p}{n}$} provides a way to project $\mA \in \tangent$ back to the manifold. 
With these tools, we can then define \textit{Riemannian gradient descent} (RGD) \citep{absil2008optimization} 
with learning rate $\eta > 0$ as an iterative algorithm updating $\mX$ as:
\begin{equation}
    \mX_{t+1} = \operatorname{Exp}_{\mX_t}(-\eta \grad[f(\mX_t)]) \eqp{.}
\end{equation}
\parnote*{Riemannian Gradient Descent with retractions}%
Unfortunately, computing the exponential function is expensive and numerically unstable \citep{arioli1996pade,moler-exponential}.
In practice, the exponential function is approximated with \emph{retractions}, \ie, cheaper functions that are locally equivalent to it. %
In the case of the Stiefel manifold, typical retractions are the QR, %
Cayley or polar retractions, see for example \citep[Ex. 4.1.2]{absil2008optimization}.

However, all of these retractions are either numerically unstable (\eg, the Cayley map involves matrix inversions), or rely on iterative processes (\eg, QR or SVD) that also require GPU-CPU communication \citep{dongarra-svd}.
Next, we focus on a more scalable alternative to Riemannian methods: the Landing algorithm.

\subsection{The Landing Algorithm}
\label{subsec:landing}

\parnote*{Introduced by X.}

Recently, \citet{ablin2022fast} introduced Landing as an alternative approach to solve \cref{eq:problem-statement}. Instead
of using retractions, Landing defines a vector field that pulls iterates to the manifold, letting them escape %
and eventually \emph{land} back, hence the name. %
Namely, Landing initializes $\mX_0 \in \Stiefel{p}{n}$ and produces the following sequence of iterates:
\begin{equation} \label{eq:landing}
    \mX_{t+1} \coloneqq \mX_t - \eta \Lambda(\mX_t) \eqp{,}
\end{equation}
where $\Lambda$ represents the \emph{landing field}, defined as
\begin{equation} \label{eq:landing-field}
    \Lambda(\mX) \coloneqq \grad + \lambda \nabla \dist(\mX) \eqp{,}
\end{equation}
where $\nabla \mathcal{N}(\mX)\!=\!(\mX \mX^\top\!-\!\vI_p)\mX$ is the gradient of the squared manifold distance, $\mathcal{N}(\mX) \coloneqq \frac{1}{4} \norm{\mX \mX^\top\!-\!\vI_p}^2$\eqp{.} 
\parnote*{Intuition, orthogonal directions.}%
Crucially, both terms in \cref{eq:landing-field} are \textit{orthogonal} to each other, as depicted in \cref{fig:scheme-landing}, and thus Landing can be interpreted as seamlessly following a loss-informed direction (Riemannian gradient) and a manifold-attractive one (normal gradient).

\begin{figure}[t]
    \centering
    \begin{tikzpicture}[scale=0.45]
        \fill[manifoldcolor!10] (5,-2) -- (7,2) -- (7,2) arc (60:120:14) -- (-7,2) -- (-5,-2) -- (5,-2) arc (60:120:10);
        \fill[white] (5,-2) -- (5,-2) arc (60:120:10) -- (5,-2);
                
        \coordinate [fill=black,inner sep=.8pt,circle,label=180:{$\mX$}] (X) at (-3,2.9);
        \coordinate [label={[yshift=-.5mm]0:{$-\mX\mS$}}] (G) at (0,3.38);
        \coordinate [label=180:{$-\lambda\nabla\mathcal{N}(\mX)$}] (N) at (-2.8,1.7);
        \coordinate [label=0:{$\mX_1$}] (L) at ($(G)-(X)+(N)-(X)+(X)$);
                
        \coordinate [label=0:{$\Stiefel{p}{n}$}] (Stiefel) at (6,0);
        \coordinate [label=180:{\color{manifoldcolor}$\Stiefel{p}{n}[\varepsilon]$}] (eps) at (5,-0.3);
                
        \draw[-{stealth[black]},black,thick] (X) --  (L);
        \draw[-{stealth[maincolor]},maincolor,thick] (X) --  (G);
        \draw[-{stealth[secondcolor]},secondcolor,thick] (X) --  (N);
        \draw[dashed] (G) -- (L);
        \draw[dashed] (N) -- (L);
                
        \draw (6,0) arc (60:120:12);
        \draw[dashed] (7,2) arc (60:120:14);
        \draw[dashed] (5,-2) arc (60:120:10);
    \end{tikzpicture}
    \caption{Illustration of the landing algorithm, adapted from \citep{ablin2024infeasible}. Landing combines two orthogonal gradients at each iteration and adapts the learning rate $\eta$ to keep $\mX_1$ within $\epsilon$-distance from the Stiefel manifold.}
    \label{fig:scheme-landing}
\end{figure}
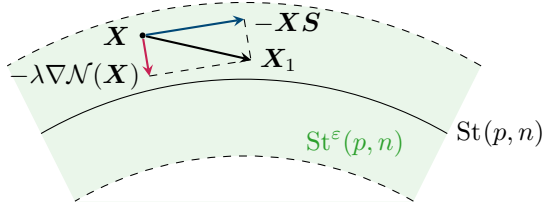

\parnote*{Good. Cheap to compute and sound.}

As a result, Landing is cheap and GPU-friendly, satisfying \desideratum2, as it \emph{only requires matrix multiplications}.
\citet{ablin2024infeasible} also proved local convergence within the manifold for Landing, conditionally on a bound of the step size $\eta$ which ensures that no iteration is farther than $\epsilon$-distance from the manifold. %
Note that $\eps$ needs to be provided as a hyperparameter which defaults to \num{0.5} otherwise. %

\parnote*{Bad. Leaving the manifold (coming back in practice?). Hyperparameters. SGD.}

Unfortunately, despite promises of an eventual landing, it is uncertain in practice \emph{when exactly} this will occur, as it depends on the training dynamics and the hyperparameters provided to the algorithm (see, \eg, \cref{fig:vit}).
Similarly, at each iteration the learning rate is computed based on the current point and gradient \citep[Lem. 3]{ablin2024infeasible}, and the following \emph{fixed} hyperparameters: \itemi~suggested learning rate, $\eta_0$; \itemii~manifold attraction strength, $\lambda$; and \itemiii~maximum manifold distance, $\epsilon$.
This extra complexity results in a trade-off \wrt the gained speed-up, hinders the principled use of typical machine learning techniques such as early stopping \citep{morgan1989generalization} and can compromise downstream performance (\desideratum3).
As we show next, this does \textit{not} need to be the case.

\section{Proximal One-step Geometric Optimization}
\label{sec:methodology}

\begin{figure}[t]
    \centering
    \begin{tikzpicture}[scale=0.45]
        \fill[manifoldcolor!10] (5,-2) -- (7,2) -- (7,2) arc (60:120:14) -- (-7,2) -- (-5,-2) -- (5,-2) arc (60:120:10);
        \fill[white] (5,-2) -- (5,-2) arc (60:120:10) -- (5,-2);
                
        \coordinate [fill=black,inner sep=.8pt,circle,label=180:{$\mX$}] (X) at (-3,2.9);
        \coordinate [label={[yshift=-1mm]0:{$\mM$}}] (G) at (0,3.38);
        \coordinate [label={[yshift=1.5mm]0:{$-\lambda\nabla\mathcal{N}(\mM)$}}] (N) at (0,1.6);
        \coordinate [label=-90:{$\mX_1$}] (L) at ($(N)$);
                
        \coordinate [label=0:{$\Stiefel{p}{n}$}] (Stiefel) at (6,0);
        \coordinate [label=180:{\color{manifoldcolor}$\Stiefel{p}{n}[\varepsilon]$}] (eps) at (5,-0.3);
                
        \draw[-{stealth[black]},black,thick] (X) --  (L);
        \draw[-{stealth[maincolor]},maincolor,thick] (X) --  (G);
        \draw[-{stealth[secondcolor]},secondcolor,thick] (G) --  (N);
                
        \draw (6,0) arc (60:120:12);
        \draw[dashed] (7,2) arc (60:120:14);
        \draw[dashed] (5,-2) arc (60:120:10);
    \end{tikzpicture}
    \caption{Illustration of the \ours algorithm, see \cref{sec:methodology}. Computing the distance \wrt the intermediate point $\mM$, \ours can calculate the exact $\lambda$ to stay within the manifold.}
    \label{fig:scheme-ours}
\end{figure}
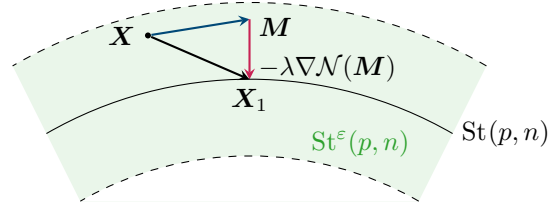

\parnote*{What we do here}

In this section, we dissect the update rule of Landing (\cref{eq:landing-field}) and reveal that it already contains the ingredients for an orthoptimizer that is more feasible~(\desideratum1) and competitive~(\desideratum3) while still being scalable~(\desideratum2), leading to our
\textbf{\textit{\ourslong (\ours)}}.
We provide all proofs for this section in \cref{app:sec:proofs}.

\subsection{Adopting Modern Optimizers}
\label{subsec:add-optims}

We argue that one of the reasons holding Landing from being competitive (\desideratum3)
\wrt* unconstrained optimizers is its reliance on SGD-like updates, as we can easily observe by unrolling \cref{eq:landing,eq:landing-field}:
\begin{equation} \label{eq:unfolded-landing}
    \mX_{t+1} = \mX_t \eqnco{- \eta\grad[f(\mX_t)]} - \eta\lambda \nabla\!\dist(\mX_t) \eqp{.}
\end{equation}
However, note that everything introduced in \cref{subsec:landing} as well as the theory provided by \citet{ablin2024infeasible} holds \emph{as long as $\mS$ is skew-symmetric}, and thus we can safely replace $\nabla f(\mX)$ in $\mS$ by the output of an unconstrained base optimizer (BO), \ie*, by $\mG \coloneqq \BO(\nabla f(\mX)) \eqp{.}$
\parnote*{Using linear optimizers?}%
Now, since $\mX\mS = \mX\Skew(\mX^\top\mG)\mX$ lies in $\tangent$ %
for any $\mG$, %
it is worth asking what properties a base optimizer must have to make $\mX\mS$ a sensible direction to follow.
In this work, we argue that it should be \textit{linear} in the following sense:
\begin{definition} \label{def:linear-optimizer}
	An optimizer is linear if its output $\mG$ is linear \wrt its input $\nabla f(\mX)$, up to a possibly $\nabla f(\mX)$-dependent scalar factor, \ie, $\mG \propto \mA \nabla f(\mX)$ for a matrix $\mA$.
\end{definition}

Then, since the relative gradient is linear \wrt $\mG$, applying a linear $\BO$ becomes an equivariant operation, \ie, 
\begin{align} 
    \mS_{\BO} &= \Skew(\mX^\top\BO(\nabla f(\mX))) \\ &\propto \BO(\Skew(\mX^\top\nabla f(\mX)) = \BO(\mS_{\nabla f(\mX)})\eqp{.}    
\end{align}
In other words, \textit{linear optimizers preserve their semantics}. If we have a linear optimizer, %
it is equivalent applying it in the Euclidean or tangent space, up to scaling.
The vanilla SGD used in Landing, $\mG = \nabla f(\mX)$, is trivially linear,
unlike the popular Adam optimizer, due to its element-wise gradient normalization \citep{kingma2014adam}.
In contrast, Vector Adam (VAdam) \citep{ling2022vectoradam} satisfies \cref{def:linear-optimizer} as it replaces Adam's normalization with a vector-wise one, see \cref{app:sec:vadam}.
As we show in \cref{subsec:osos}, linear base optimizers that are adaptive, such as VAdam, provide a needed competitive edge in complex optimization landscapes. %

\subsection{Leap, Land, Repeat}
\label{subsec:compute-lambda}

\parnote*{Motivation: 1-step distance}

Another aspect to improve in Landing is feasibility (\desideratum1), as it is only asymptotically guaranteed.
Here, we question whether we can land after each iteration or, 
in other words:
\emph{If we were to follow \cref{eq:unfolded-landing}, can we compute the optimal value of $\lambda$ that puts us back on the manifold?}

\parnote*{Intermediate step}

\paragraph{Intermediate step.}
Unfortunately, it seems really difficult to answer the question above if we use \cref{eq:unfolded-landing}.
Instead, we propose to %
reformulate the landing update and follow instead the normal direction of the \emph{intermediate update}:
\begin{align} \label{eq:intermediate}
    \mM_t &\coloneqq \mX_t - \eta \mX_t\mS \eqp{,} \\
    \mX_{t+1} &\coloneqq \mM_t - \lambda \nabla \mathcal{N}(\mM_t) \eqp{.} \label{eq:intermediate-2}
\end{align}
As depicted in \cref{fig:scheme-ours}, $\mM_t$ is the intermediate update result of following $\mX_t\mS$ in tangent space.
This reformulation has two immediate benefits: \itemi~we have disentangled the two steps sizes $\eta$ and $\lambda$, as both were entangled in \cref{eq:unfolded-landing}; and \itemii~we can more easily compute the effect that following the normal direction has on $\mX_{t+1}$, as we show next.
    
\parnote*{Compute lambda in closed-form}

\paragraph{Landing polynomial.}
Let $P(\lambda)$ be the squared distance of $\mX_{t+1}$ after following  \cref{eq:intermediate,eq:intermediate-2}, $P(\lambda) \coloneqq 4\dist(\mX_{t+1})$\eqp{.}
Then, $P(\lambda)$ turns out to have a nice symbolic form:
\begin{lemma} \label{lemma:quartic}
    Let $P(\lambda)$ be as defined above, then $P(\lambda)$ is a quartic polynomial \wrt $\lambda$. Specifically, 
    \begin{align*}
        P(\eqnco{\lambda}) 
        & = \innerfro{\mE}{\mE}\eqnco{\lambda^4} + 2\innerfro{\mE}{\mD}\eqnco{\lambda^3} + %
        \\ 
        & \quad + [\innerfro{\mD}{\mD} + 2\innerfro{\mD}{\mE}]\eqnco{\lambda^2} \\
        & \quad + \innerfro{\mD}{\mC} \eqnco{\lambda} + \innerfro{\mC}{\mC} \eqp{,}
    \end{align*}
    where 
    $\mC \coloneqq \mM_t\mM_t^\top\!-\!\mI$, $\mE\coloneq \nabla\!\dist(\mM_t)\nabla\!\dist(\mM_t)^\top$ and $\mD \coloneqq \mM_t\nabla\!\dist(\mM_t)^\top\!+\!\nabla\!\dist(\mM_t)\mM_t^\top$\eqp{.}
\end{lemma}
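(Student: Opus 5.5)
The plan is a direct algebraic expansion: write $\mX_{t+1}\mX_{t+1}^\top - \mI$ as a matrix polynomial of degree two in $\lambda$, then take its squared Frobenius norm. Abbreviate $\mM \coloneqq \mM_t$ and $\mN \coloneqq \nabla\dist(\mM_t) = (\mM\mM^\top - \mI)\mM = \mC\mM$. By \cref{eq:intermediate-2}, $\mX_{t+1} = \mM - \lambda\mN$, and since $\dist(\mY) = \tfrac14\norm{\mY\mY^\top - \mI}^2$ we have $P(\lambda) = \norm{\mX_{t+1}\mX_{t+1}^\top - \mI}^2$.

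\textbf{Step 1 (the inner matrix).} Multiply out:
\begin{equation*}
\mX_{t+1}\mX_{t+1}^\top - \mI = (\mM\mM^\top - \mI) - \lambda(\mM\mN^\top + \mN\mM^\top) + \lambda^2 \mN\mN^\top = \mC - \lambda\mD + \lambda^2\mE ,
\end{equation*}
with $\mC,\mD,\mE$ exactly as in the statement. All three matrices are symmetric, which is what lets the cross terms combine in the next step.

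\textbf{Step 2 (the norm).} Expand $\innerfro{\mC - \lambda\mD + \lambda^2\mE}{\mC - \lambda\mD + \lambda^2\mE}$ using bilinearity and symmetry of the Frobenius inner product, and collect powers of $\lambda$:
\begin{equation*}
P(\lambda) = \innerfro{\mE}{\mE}\lambda^4 - 2\innerfro{\mD}{\mE}\lambda^3 + \bigl[\innerfro{\mD}{\mD} + 2\innerfro{\mC}{\mE}\bigr]\lambda^2 - 2\innerfro{\mC}{\mD}\lambda + \innerfro{\mC}{\mC} .
\end{equation*}
This shows that $P$ is a quartic in $\lambda$. Its leading coefficient $\norm{\mE}^2$ is strictly positive unless $\mN = 0$, i.e.\ unless $\mM$ already lies on the manifold.

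\textbf{Step 3 (matching the stated coefficients).} The main point of care is reconciling the signs and the pairings of matrices in each coefficient with the displayed formula. With the descent convention $\mX_{t+1} = \mM - \lambda\mN$, the odd-degree terms carry minus signs. They become $+$ only under the reparametrisation $\lambda \mapsto -\lambda$, i.e.\ the update $\mM + \lambda\mN$. Moreover, in my expansion the $\lambda^2$ cross term pairs $\mC$ with $\mE$, and the linear coefficient carries a factor $2$. I would present the derivation above and state explicitly which of these conventions and factors the displayed formula reflects.

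As an optional simplification, note that $\mM\mM^\top = \mC + \mI$, so $\mD = 2\mC(\mC + \mI)$ and $\mE = \mC^2(\mC + \mI)$ are polynomials in $\mC$. Hence all the coefficients are traces of polynomials in the single symmetric matrix $\mC$. This makes the computation cheap in practice and confirms the symmetry used in Step 2.
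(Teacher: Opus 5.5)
Your expansion is correct and is the same direct computation as the paper's proof, which writes $\mX_{t+1} = \mA + \lambda\mB$ with $\mA = \mM$ and $\mB = -(\mM\mM^\top - \mI)\mM$ and expands $\norm{\mC + \lambda\mD + \lambda^2\mE}^2$; there $\mD = \mA\mB^\top + \mB\mA^\top$ is the negative of the stated one, and this convention is what produces the $+$ signs on the odd powers. The other two mismatches you flag are not conventions but typos in the displayed formula, and the paper's proof repeats them: for $p = n = 1$, $\mM = \sqrt{2}$, $\lambda = 1/2$ the display gives $13.25$ (or $-2.75$ with the flipped $\mD$), whereas $P(1/2) = 1/4$, so you should commit to your coefficients $\innerfro{\mD}{\mD} + 2\innerfro{\mC}{\mE}$ for $\lambda^2$ and $-2\innerfro{\mC}{\mD}$ for $\lambda$ rather than try to reconcile them with the display.
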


We name $P(\lambda)$ the \emph{landing polynomial} and, since it is a quartic polynomial, the problem $\min_\lambda P(\lambda) = 0$ has a \emph{known closed-form solution} using radicals. 
Furthermore, each coefficient in $P(\lambda)$ can be computed in $\bigO(p^2n)$. %
Note that all these operations are efficiently handled by GPUs. 

\parnote*{Note about how to choose the root and mention to the complex numbers (where I can just pick the one with the least absolute value).}

\paragraph{Choosing a step size.}
One remaining question is how to pick $\lambda$ out of the four roots of $P(\lambda)$. 
If $\mX$ took values in an algebraically-closed field (\eg, the complex, $\sC$), the answer would be simple: the one with the \emph{smallest norm}, minimizing the distance from $\mM$.
In the case of real values, we opt for the \emph{closest real value to any of the roots}, \ie, $\argmin_{\indexone\in\{1,\dots,4\}} \min_{\lambda_\indexone \in \sR} (\lambda_\indexone - \bar\lambda_\indexone)^2$ where $\bar\lambda_\indexone$ is one root in the algebraic closure of the field. %
Conveniently, %
this equals %
taking the real part of the root with the least imaginary part.%

\subsection{A Surprising Approximation}
\label{subsec:approximate-lambda}

While we can efficiently solve the landing polynomial $P(\lambda)$, %
this still incurs some additional overhead. %
In this section, we investigate whether there exists a good approximation for $\lambda$ that is cheaper to compute.
\parnote*{Case when we are in the manifold.}%
To this end, we %
start by assuming that $\mX$ lies on the manifold, in which case
we can find a simple bound of how far $\mM$ is from $\Stiefel{p}{n}$:
\begin{proposition} \label{prop:bound-intermediate-step}
	Let $\mX \in \Stiefel{p}{n}$ and $\mM = \mX\!-\!\eta \mX\mS$ with $\mS$ skew-symmetric, then
    \begin{equation} \label{eq:bound-intermediate-step}
        \norm{\mM\mM^\top - \mI} \leq \eta^2 \norm{\mS^2} \eqp{.}
    \end{equation}
\end{proposition}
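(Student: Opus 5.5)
The plan is to expand $\mM\mM^\top$ directly and let the skew-symmetry of $\mS$ cancel the first-order terms in $\eta$. Here $\mX \in \sR^{p\times n}$ and $\mS \in \sR^{n\times n}$ with $\mS^\top = -\mS$. Writing $\mM = \mX(\mI_n - \eta\mS)$, we get
\begin{equation*}
    \mM\mM^\top = \mX(\mI - \eta\mS)(\mI - \eta\mS)^\top\mX^\top = \mX(\mI - \eta\mS)(\mI + \eta\mS)\mX^\top = \mX(\mI - \eta^2\mS^2)\mX^\top \eqp{.}
\end{equation*}
The cross terms cancel precisely because $\mS^\top = -\mS$, which leaves $\mS\mS^\top = -\mS^2$. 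Since $\mX\mX^\top = \mI_p$, this yields the exact identity
\begin{equation*}
    \mM\mM^\top - \mI_p = -\eta^2\, \mX\mS^2\mX^\top \eqp{.}
\end{equation*}

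The remaining step is the norm bound $\norm{\mX\mS^2\mX^\top} \leq \norm{\mS^2}$, where the norm is the Frobenius norm induced by the paper's inner product. I will use the standard inequality $\norm{\mA\mB\mC}_F \leq \norm{\mA}_2\norm{\mB}_F\norm{\mC}_2$, with $\norm{\cdot}_2$ the spectral norm. Because $\mX$ has orthonormal rows, all of its singular values equal $1$, so $\norm{\mX}_2 = \norm{\mX^\top}_2 = 1$. Alternatively, one can complete $\mX$ to an orthogonal $n\times n$ matrix $\mQ$; then $\mX\mS^2\mX^\top$ is the top-left $p\times p$ block of $\mQ\mS^2\mQ^\top$, and that matrix has the same Frobenius norm as $\mS^2$. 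Either route gives the claimed inequality $\norm{\mM\mM^\top - \mI} \leq \eta^2\norm{\mS^2}$.

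There is no real obstacle. The only points needing care are the dimension conventions ($\mS$ acts on the right, so it is $n\times n$) and the justification that the compression by $\mX$ does not increase the norm. I would state that justification explicitly via the block-submatrix argument, since it is elementary and self-contained.
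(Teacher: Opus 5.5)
Your proposal is correct and follows the paper's proof. The same expansion gives the exact identity $\mM\mM^\top - \mI = -\eta^2\mX\mS^2\mX^\top$, and what remains is to show that compressing $\mS^2$ by $\mX$ does not increase its Frobenius norm. The paper does this with the trace and the projector $\mP = \mX^\top\mX$; its intermediate equality $\trace(\mS^2\mP\mS^2\mP) = \norm{\mS^2\mP}^2$ should really read $\norm{\mP\mS^2\mP}^2 \leq \norm{\mS^2\mP}^2$, which still gives the bound, and your spectral-norm or submatrix argument justifies the same step more cleanly.
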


Remarkably, the bound above depends \emph{exclusively} on the learning rate $\eta$ and the relative gradient, $\mS$.
Now, since $\mX\in\Stiefel{p}{n}$ and $\mS$ is the skew-symmetric part of $\mX^\top\mG$, it is clear that $\norm{\mS}$ is upper-bounded by $\norm{\mG}$.
Therefore, we make the following mild assumption: %

\parnote*{Assumptions}

\begin{assumption} \label{ass:bound-G}
    $\norm{\mG}$ is upper bounded by $L$, \ie, $\norm{\mG} \leq L$\eqp{.}
\end{assumption}
Note that the assumption above bounds the norm of $\mG$, \ie, \emph{the output of the base optimizer}, and not the Euclidean gradient, $\nabla f(\mX)$.
Now, if we define $\xi \coloneqq \eta L$, then \cref{ass:bound-G} implies that $\eta^2\norm{\mS^2} \leq \xi^2$ (see \cref{app:prop:bound-trace}).
As it turns out, if we keep $\xi < 1$ by \eg setting the learning rate appropriately, then $\lambda = 1/2$ becomes a great approximation:
\begin{proposition} \label{prop:polynomial-bound-stiefel}
	Let $\mX \in \Stiefel{p}{n}$ and assume that $\xi < 1$, then we have that $P(1/2) = \littleo(\xi^7)$\eqp{.}
\end{proposition}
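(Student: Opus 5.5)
The plan is to avoid the general quartic coefficients of \cref{lemma:quartic} and instead use the special structure available when $\mX \in \Stiefel{p}{n}$. All the relevant matrices turn out to be polynomials in the single symmetric matrix $\mC \coloneqq \mM\mM^\top - \mI$, so $P(1/2)$ collapses to the norm of a polynomial in $\mC$. That norm can then be bounded by powers of $\norm{\mC}$, and \cref{prop:bound-intermediate-step} controls $\norm{\mC}$.

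\textbf{Step 1: express the update through $\mC$.} Because $\nabla\dist(\mM) = (\mM\mM^\top - \mI)\mM = \mC\mM$, the landing step of \cref{eq:intermediate-2} is
\begin{equation*}
\mX_{t+1} = \mM - \lambda\mC\mM = (\mI - \lambda\mC)\mM.
\end{equation*}
Since $\mC$ is symmetric and commutes with $\mI$, this gives
\begin{equation*}
\mX_{t+1}\mX_{t+1}^\top - \mI = (\mI-\lambda\mC)(\mI+\mC)(\mI-\lambda\mC) - \mI = (\mI-\lambda\mC)^2(\mI+\mC) - \mI.
\end{equation*}
This holds for any $\mM$, so no orthogonality of $\mX$ is needed yet.

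\textbf{Step 2: evaluate at $\lambda = 1/2$.} Expanding gives
\begin{equation*}
(\mI - \mC + \tfrac14\mC^2)(\mI+\mC) - \mI = -\tfrac34\mC^2 + \tfrac14\mC^3.
\end{equation*}
The linear term in $\mC$ cancels exactly; this cancellation is the whole reason $\lambda = 1/2$ is special. Hence
\begin{equation*}
P(1/2) = \norm{-\tfrac34\mC^2 + \tfrac14\mC^3}^2.
\end{equation*}

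\textbf{Step 3: bound by powers of $\xi$.} Apply the triangle inequality and submultiplicativity of the Frobenius norm:
\begin{equation*}
P(1/2) \le \bigl(\tfrac34\norm{\mC}^2 + \tfrac14\norm{\mC}^3\bigr)^2.
\end{equation*}
Here the hypothesis $\mX\in\Stiefel{p}{n}$ enters, through \cref{prop:bound-intermediate-step} (or directly, since $\mM\mM^\top = \mX(\mI-\eta\mS)(\mI+\eta\mS)\mX^\top = \mI - \eta^2\mX\mS^2\mX^\top$). It gives $\norm{\mC}\le\eta^2\norm{\mS^2}$, and \cref{ass:bound-G} together with \cref{app:prop:bound-trace} gives $\eta^2\norm{\mS^2}\le\xi^2$. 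Therefore
\begin{equation*}
P(1/2)\le \bigl(\tfrac34\xi^4+\tfrac14\xi^6\bigr)^2 \le \xi^8 \qquad \text{for } \xi<1.
\end{equation*}
Since $\xi^8/\xi^7 = \xi\to 0$, this shows $P(1/2) = \littleo(\xi^7)$; in fact it is $\bigO(\xi^8)$.

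\textbf{Expected obstacle.} The only delicate point is Step 1: one must notice that $\nabla\dist(\mM) = \mC\mM$ so that everything commutes. Working with the general coefficients $\mC,\mD,\mE$ of \cref{lemma:quartic} would obscure the cancellation. If the paper's $\mathcal{O}$-bookkeeping intends $\xi$-dependence only through $\norm{\mC}$, the bound above already suffices. The remaining steps are routine, apart from checking that the constant in \cref{app:prop:bound-trace} indeed gives $\eta^2\norm{\mS^2}\le\xi^2$ and not a multiple of it. Any constant factor there would leave the $\bigO(\xi^8)$ rate, and hence the claim, unchanged.
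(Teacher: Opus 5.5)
Your proof is correct and follows essentially the same route as the paper's. Both write $\mX_{t+1}\mX_{t+1}^\top-\mI$ as a polynomial in $\mC=\mM\mM^\top-\mI$, note that the linear term cancels at $\lambda=1/2$ and leaves $\mC^2(\tfrac34\mI-\tfrac14\mC)$ up to sign, and then use $\norm{\mC}\le\xi^2$ from \cref{prop:bound-intermediate-step} to reach exactly the paper's bound $(\tfrac34+\tfrac14\xi^2)^2\xi^8$. Your term-by-term triangle inequality is in fact slightly cleaner: the paper's factored step $\norm{\mI-\tfrac14\mM\mM^\top}\le\tfrac34+\tfrac14\xi^2$ implicitly takes $\norm{\mI}=1$, which holds for the spectral norm but not the Frobenius norm.
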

 
\parnote*{Introduce $\epsilon$-closesness.}%
This result is quite remarkable: If we keep $\norm{\mG}$ under control, %
then we can fix $\lambda$ to $1/2$ and stay within a $\littleo(\xi^{7/2})$-radius from the manifold.
We now move to the general case, where we assume that $\mX$ is at most $\epsilon$-far from the manifold, 
and show that the landing polynomial of $\mX$ can be bound as a function %
of its projection to the manifold:

\parnote*{General case polynomial}

\begin{theorem} \label{prop:general-polynomial-bound}
    Let $\mX \in \sR^{p\times n}$ \st* %
    $\norm{\mX\mX^\top - \mI} \leq \epsilon$
    and further assume that $\xi < 1$. Then
	\begin{align*}
		P(\lambda)
		&\leq 2 P_\mY(\lambda) + 2[2\!+\!2\sqrt{P_\mY(\lambda)}\!+\!Q(\lambda, \epsilon)]^2 Q(\lambda, \epsilon)^2 \eqp{,}
	\end{align*}
	where $P_\mY(\lambda)$ is the landing polynomial of $\mY$, the projection of $\mX$ onto the Stiefel manifold, %
	and where
	\begin{equation}
		Q(\lambda, \epsilon) \coloneqq (24\lambda + 3)\epsilon + \littleo(\epsilon) \quad \text{as } \epsilon \rightarrow 0 \eqp{.}
	\end{equation}
\end{theorem}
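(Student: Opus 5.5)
We compare the iterate produced from $\mX$ with the iterate produced from its projection $\mY$, and control the difference by a perturbation argument in $\epsilon$. Write $\mX = \mY + \boldsymbol{\Delta}$, where $\mY$ is the polar projection of $\mX$ onto $\Stiefel{p}{n}$. Writing $\mX = \mP\mY$ with $\mP = (\mX\mX^\top)^{1/2}$ gives $\norm{\mP-\mI} \le \epsilon$, hence $\norm{\boldsymbol\Delta} \le \epsilon$. Use the same skew-symmetric $\mS$ for both, as the statement implicitly does by comparing landing polynomials. Define the intermediate points $\mM_\mX = \mX(\mI-\eta\mS)$ and $\mM_\mY = \mY(\mI-\eta\mS)$. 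Let $\mX_1(\lambda)$ and $\mY_1(\lambda)$ be the outputs of \cref{eq:intermediate-2}, and set $\mR(\lambda) \coloneqq \mX_1(\lambda)-\mY_1(\lambda)$. Then
\begin{equation*}
\mX_1\mX_1^\top - \mI = (\mY_1\mY_1^\top - \mI) + \mY_1\mR^\top + \mR\mY_1^\top + \mR\mR^\top .
\end{equation*}
Applying $\norm{a+b}^2 \le 2\norm{a}^2 + 2\norm{b}^2$ gives
\begin{equation*}
P(\lambda) \le 2P_\mY(\lambda) + 2\bigl(2\norm{\mY_1}+\norm{\mR}\bigr)^2\norm{\mR}^2 .
\end{equation*}
This already has the claimed shape once we show $\norm{\mY_1} \le 1 + \sqrt{P_\mY(\lambda)}$ and $\norm{\mR} \le Q(\lambda,\epsilon)$, since then $2\norm{\mY_1}+\norm{\mR} \le 2 + 2\sqrt{P_\mY} + Q$.

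\textbf{Bounding $\norm{\mY_1}$.} Use the spectral norm, which is dominated by the Frobenius norm. Then $\norm{\mY_1}^2 = \norm{\mY_1\mY_1^\top} \le 1 + \norm{\mY_1\mY_1^\top-\mI} = 1 + \sqrt{P_\mY(\lambda)}$. Since $\sqrt{1+a} \le 1 + a$ for $a \ge 0$, this gives $\norm{\mY_1} \le 1 + \sqrt{P_\mY}$. The factor $2\sqrt{P_\mY}$ in the statement leaves slack for this step.

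\textbf{Bounding $\norm{\mR}$.} This is the main obstacle, mostly bookkeeping. First, $\mM_\mX - \mM_\mY = \boldsymbol\Delta(\mI-\eta\mS)$. Because $\mS$ is skew-symmetric, $\mI-\eta\mS$ has singular values $\sqrt{1+\eta^2\sigma_i^2}$, and with $\eta\norm{\mS} \le \xi < 1$ we get $\norm{\mI-\eta\mS} \le \sqrt2$. Hence $\norm{\mM_\mX-\mM_\mY} \le \sqrt2\,\epsilon$, which accounts for the constant term in $Q$.

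Next, expand the normal field $\nabla\mathcal N(\mM) = \mM\mM^\top\mM - \mM$. Its difference is a telescoping sum:
\begin{align*}
\nabla\mathcal N(\mM_\mX) - \nabla\mathcal N(\mM_\mY) &= \boldsymbol\delta\,\mM_\mX^\top\mM_\mX + \mM_\mY\boldsymbol\delta^\top\mM_\mX \\
&\quad + \mM_\mY\mM_\mY^\top\boldsymbol\delta - \boldsymbol\delta,
\end{align*}
where $\boldsymbol\delta = \mM_\mX - \mM_\mY$. Each of $\norm{\mM_\mX}$ and $\norm{\mM_\mY}$ is at most $\sqrt2(1+\epsilon)$. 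Bounding term by term gives a constant times $\epsilon$, plus $\littleo(\epsilon)$ terms that collect higher powers of $\epsilon$. The result is
\begin{equation*}
\norm{\mR} \le \norm{\boldsymbol\delta} + \lambda\,\norm{\nabla\mathcal N(\mM_\mX)-\nabla\mathcal N(\mM_\mY)} \le (c_1\lambda + c_0)\epsilon + \littleo(\epsilon).
\end{equation*}
We then check that the crude constants fit under $24\lambda+3$. Roughly, three cubic terms each contribute about $2\sqrt2\cdot\sqrt2\cdot\ldots$, giving at most $3\cdot 2^{3/2}\cdot\sqrt2 \cdot 2 = 24$, and the constant term gives $\sqrt2 \le 3$. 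If the first attempt overshoots, the fallback is to use $\norm{\mY}=1$ and $\mM_\mY\mM_\mY^\top = \mI + \eta^2\mY\mS^\top\mS\mY^\top$ to tighten the estimates.

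Setting $Q(\lambda,\epsilon)$ equal to this bound and substituting both estimates into the displayed inequality proves the theorem.
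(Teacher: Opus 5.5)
Your skeleton matches the paper's. You split the output from $\mX$ into the output from its polar projection $\mY$ plus a remainder, and use $\norm{a+b}^2\le 2\norm{a}^2+2\norm{b}^2$. You bound the $\mY$-iterate by $1+\sqrt{P_\mY(\lambda)}$ in spectral norm (you justify this; the paper only asserts it), and bound the remainder by $Q(\lambda,\epsilon)$.

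The one real divergence is running both points with the same $\mS$, and that is not what the statement means. In the paper, the landing polynomial of a point is built from that point's own relative gradient. The iterate from $\mX$ uses $\mS=\Skew(\mX^\top\mG)$, while $P_\mY$ uses $\mS_\mY=\Skew(\mY^\top\mG)$. This is also what lets \cref{prop:polynomial-bound-stiefel} be applied verbatim to $P_\mY$ in the induction for \cref{thm:main-result}. So, as written, you bound a different polynomial. The repair is the paper's $\mT_\mR$ term: with $\mM_\mY=\mY(\mI-\eta\mS_\mY)$,
\begin{equation*}
\boldsymbol\delta=\mM_\mX-\mM_\mY=\boldsymbol\Delta(\mI-\eta\mS)-\eta\,\mY\,\Skew(\boldsymbol\Delta^\top\mG),
\end{equation*}
and the extra piece has norm at most $\eta\epsilon L=\xi\epsilon<\epsilon$. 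Hence $\norm{\boldsymbol\delta}\le(1+\sqrt2)\epsilon+O(\epsilon^2)$. Relatedly, for $\mS=\Skew(\mX^\top\mG)$ you only get $\eta\norm{\mS}_2\le\norm{\mX}_2\,\xi\le\sqrt{1+\epsilon}\,\xi$, not $\le\xi$. This is harmless, since it only changes terms at order $\epsilon^2$.

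Replace the ``roughly $\dots=24$'' heuristic with the actual count from your telescoping identity. Group the last two terms as $(\mM_\mY\mM_\mY^\top-\mI)\boldsymbol\delta$, and use
\begin{equation*}
\norm{\mM_\mY}_2^2\le1+\xi^2<2,\qquad \norm{\mM_\mY\mM_\mY^\top-\mI}_2=\eta^2\norm{\mY\mS_\mY^2\mY^\top}_2\le\xi^2<1,\qquad \norm{\mM_\mX}_2^2\le(1+\epsilon)(2+\epsilon).
\end{equation*}
This gives $\norm{\nabla\mathcal N(\mM_\mX)-\nabla\mathcal N(\mM_\mY)}\le5\norm{\boldsymbol\delta}+O(\epsilon^2)$. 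Hence $\norm{\mR}\le(1+\sqrt2)(1+5\lambda)\epsilon+O(\epsilon^2)$, well inside $(24\lambda+3)\epsilon+\littleo(\epsilon)$.

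Done this way, your route is sharper than the paper's. The paper bounds its intermediate point $\mY-\eta\mY\mS_\mY$ by $1+\xi<2$ via the triangle inequality, where you use $\norm{\mI-\eta\mS}_2\le\sqrt{1+\eta^2\norm{\mS}_2^2}$. It also expands through $\mT_1$, producing the tally $8(3+\epsilon)\epsilon+2(3+\epsilon)^2\epsilon^2+[\epsilon+(1+\epsilon)^2](3+\epsilon)\epsilon$. The leading term of that tally is $27\epsilon$, not the stated $24\epsilon$. It is your estimate, not the paper's bookkeeping, that actually certifies the constant $24$.
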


Thus, we can bound the landing polynomial (\ie, the squared distance of $\mX_{t+1}$ with a step size of $\lambda$, see \cref{eq:intermediate-2}) as a function of only $\lambda$ and the distance of $\mX_t$. %
\parnote*{General case.}%
If we now start reasoning from $\mX_0 \in \Stiefel{p}{n}$, we know through \cref{prop:polynomial-bound-stiefel} that one iteration of \cref{eq:intermediate,eq:intermediate-2} with $\lambda = 1/2$ puts $\mX_1$ at most $\littleo(\xi^{7/2})$-far from the manifold. 
Then, we can bound the distance of $\mX_2$ using  \cref{prop:general-polynomial-bound} with $\epsilon = \littleo(\xi^{7/2})$ and $\lambda = 1/2$.
Finally, if we continue with this inductive process, we get to the proof of the following result:
\parnote*{Bound for $\lambda = 0.5$}%
\begin{theorem} \label{thm:main-result}
	If $\mX_0 \in \Stiefel{p}{n}$, $\xi \coloneqq \eta L < 1$ and  $\lambda = 1/2$, then for every $\mX_t$ produced by \cref{eq:intermediate,eq:intermediate-2} it holds that
	\begin{equation}
		P(1/2) = \norm{\mX_t \mX_t^\top - \mI}^2 = \littleo(\xi^7) %
        \eqp{.}
	\end{equation}
\end{theorem}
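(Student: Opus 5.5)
The plan is to prove the statement by induction on $t$, tracking $\epsilon_t \coloneqq \norm{\mX_t\mX_t^\top - \mI}$ and showing that $\epsilon_t^2 = \littleo(\xi^7)$ uniformly in $t$.

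For the base case, $\mX_0 \in \Stiefel{p}{n}$ gives $\epsilon_0 = 0$. The first step then follows directly from \cref{prop:polynomial-bound-stiefel}: $\epsilon_1^2 = P(1/2) = \littleo(\xi^7)$. Throughout I will use \cref{ass:bound-G} in the form $\eta^2\norm{\mS^2}\le\xi^2$. This holds at every iterate because $\norm{\mS}\le\norm{\mX}\norm{\mG}$, and $\norm{\mX}$ stays bounded by a constant depending only on $\epsilon_t$, as long as $\epsilon_t$ stays small.

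For the inductive step, assume $\epsilon_t = \littleo(\xi^{7/2})$. Let $\mY_t$ be the projection of $\mX_t$ onto $\Stiefel{p}{n}$, i.e.\ its polar factor. Apply \cref{prop:general-polynomial-bound} with $\epsilon = \epsilon_t$ and $\lambda = 1/2$. The first term $P_{\mY_t}(1/2)$ is a landing polynomial evaluated from a point exactly on the manifold, so \cref{prop:polynomial-bound-stiefel} gives $P_{\mY_t}(1/2) = \littleo(\xi^7)$. For this I must check that the tangent step taken from $\mY_t$ satisfies the same $\xi<1$ hypothesis; I will argue this via the bound on $\norm{\mG}$, which does not depend on the base point. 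Next, $Q(1/2,\epsilon_t) = 15\epsilon_t + \littleo(\epsilon_t) = \littleo(\xi^{7/2})$. The bracket $2 + 2\sqrt{P_{\mY_t}} + Q$ is bounded by a constant (about $2$ for small $\xi$). Hence
\begin{equation*}
\epsilon_{t+1}^2 \le 2\,\littleo(\xi^7) + 2\,\mathcal{O}(1)\,\littleo(\xi^{7/2})^2 = \littleo(\xi^7),
\end{equation*}
which closes the induction.

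The main obstacle is that the induction as stated does not obviously give a $t$-uniform constant. Writing $\epsilon_{t+1}^2 \le a + c\,\epsilon_t^2$ with $a = \littleo(\xi^7)$ and $c \approx 2\cdot 4\cdot 15^2$, iterating could blow the constant up geometrically. To handle this I would try to sharpen the estimate, using the fact that the normal step with $\lambda = 1/2$ actually contracts the distance: for the pure normal step, $\mX\mX^\top - \mI \mapsto \mathcal{O}(\norm{\mX\mX^\top-\mI}^2)$, i.e.\ Newton-like quadratic convergence toward the manifold. This should yield a recursion of the form $\epsilon_{t+1} \le \littleo(\xi^{7/2}) + C\epsilon_t^2 + C\xi\epsilon_t$, with contraction factor below one when $\xi$ is small. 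A fixed-point argument then gives $\sup_t \epsilon_t = \littleo(\xi^{7/2})$. If that refinement fails, I would fall back on reading the statement as holding for each fixed $t$, where the naive induction suffices since $t$ fixed means finitely many constant factors.
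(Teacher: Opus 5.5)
Your induction is exactly the paper's proof. The base case comes from \cref{prop:polynomial-bound-stiefel}, and the inductive step from \cref{prop:general-polynomial-bound} with $\lambda=1/2$. There, $P_{\mY_t}(1/2)$ is bounded by applying \cref{prop:polynomial-bound-stiefel} at the polar factor, which is legitimate because the same $\mG$ is used there. The obstacle you raise is genuine, and the paper's proof has it too. The paper simply asserts that $\epsilon$ ``remains'' $\littleo(\xi^{7/2})$. But \cref{prop:general-polynomial-bound} only gives $\epsilon_{t+1}^2\le 2P_{\mY_t}(1/2)+c\,\epsilon_t^2$ with $c\approx 1800$, as you computed. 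So both arguments deliver only the fixed-$t$ reading, with constants growing geometrically in $t$.

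Your refinement is the right fix, and it becomes short if you bypass \cref{prop:general-polynomial-bound} and the projection entirely. Two exact identities suffice.
First, for any $\mX$, on or off the manifold, and any skew $\mS$, $(\mI-\eta\mS)(\mI-\eta\mS)^\top=\mI-\eta^2\mS^2$. Hence $\mM\mM^\top-\mI=(\mX\mX^\top-\mI)+\eta^2(\mX\mS)(\mX\mS)^\top$, with no first-order term.
Second, set $\mC\coloneqq\mM\mM^\top-\mI$. The $\lambda=1/2$ step $\mX_{+}=(\mI-\tfrac12\mC)\mM$ satisfies $\mX_{+}\mX_{+}^\top-\mI=-\tfrac14\mC^2(3\mI-\mC)$ for arbitrary $\mM$. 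This is precisely the computation in the proof of \cref{prop:polynomial-bound-stiefel}, which uses $\mX\in\Stiefel{p}{n}$ only to bound $\norm{\mC}$.

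Now use $\norm{\mX_t}_2^2\le 1+\epsilon_t$ (every $|\sigma_i(\mX_t)^2-1|\le\epsilon_t$) and $\norm{\mS}\le\norm{\mX_t}_2\norm{\mG}$. Combining with the two identities gives
\[
\epsilon_{t+1}\le\big(\tfrac34+\tfrac14 f_t\big)f_t^2,\qquad f_t\coloneqq\epsilon_t+(1+\epsilon_t)^2\xi^2 .
\]
This is your Newton-type recursion, with an even better cross term $\xi^2\epsilon_t$ in place of $\xi\epsilon_t$. For instance, if $\xi^2\le 1/10$, induction from $\epsilon_0=0$ gives $\epsilon_t\le\xi^4$ for all $t$. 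Hence $\norm{\mX_t\mX_t^\top-\mI}^2\le\xi^8$ uniformly in $t$, which is the uniform version of the statement.

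This also shows why \cref{prop:general-polynomial-bound} is the wrong tool for uniformity. Comparing with the step taken from $\mY_t$ costs an error linear in $\epsilon_t$ with a constant far above $1$. That discards the quadratic contraction of the normal step.
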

In other words, if we ensure that $\xi < 1$ by a combination of \itemi decreasing $\eta$ and \itemii using base optimizers with gradient normalization, \eg VAdam \citep{ling2022vectoradam}, then \cref{thm:main-result} ensures that all iterations $\mX_t$ from \cref{eq:intermediate,eq:intermediate-2} stay close to the manifold if we simply set $\lambda = 1/2$.

\parnote*{Intuition of the solution $\lambda = 1/2$.}

\paragraph{Intuition.}

We note that by substituting $\lambda = 1/2$ in \cref{eq:intermediate-2} we obtain $\mX_{t+1} = (\frac{3}{2} \mI - \frac{1}{2}\mM\mM^\top)\mM$, which equals the last step of SLPG \citep{liu2024penalty}, as discussed in \cref{app:sec:relation-slpg}.
We can leverage this coincidence to shade some light on why $\lambda = 1/2$ is such a good approximation.
Namely, \citet{liu2024penalty} proposed to use the expression above as an approximation to a polar retraction, $(\mM\mM^\top)^{-1/2}\mM$, since $\frac{3}{2} - \frac{1}{2}z$ is a first-order Taylor expansion of $z^{-1/2}$ at 1.
As such, we can interpret \cref{eq:intermediate} ($\mM_t$) as a gradient step in \tangent, and \cref{eq:intermediate-2} ($\mX_{t+1}$) as an approximated polar retraction.
This intuition further supports \cref{prop:polynomial-bound-stiefel,thm:main-result}.

\parnote*{Convergence using landing results.}

\paragraph{Convergence.}

Given the connection of \ours with Landing and the intuition  above, the convergence of \ours to a stationary point where $\norm{\mS} = \norm{\nabla\dist(\mX)} = 0$ could be shown in two ways.
First, we can leverage Landing convergence results by reinterpreting \cref{eq:intermediate,eq:intermediate-2} as Landing with alternating step sizes (such that $\eta\lambda = 0$ and $\eta + \lambda \neq 0$).
Second, we can also interpret \ours for the case $\lambda = 1/2$ and $\mG = \nabla f(\mX)$ as an $\littleo(\xi^{7/2})$-approximation of RGD with polar retractions, whose convergence is well-understood, \eg in \citet[Cor. 4.9]{boumal2023introduction}.

\subsection{Summary and Practical Considerations}

\parnote*{Summary.}

To recapitulate, we provide in \cref{alg:ours} a description of what constitutes \ours, which extends the ideas behind Landing such that, at every step, it approximately finds a point on the manifold proximal to the intermediate update.

\parnote*{Computational cost.}

\paragraph{Computational cost.} 
As described in \cref{alg:ours}, \ours only consists of additions and multiplications, and is therefore GPU-friendly. 
Namely, \ours needs \emph{5 matrix multiplications} if we set $\lambda = 1/2$, $\bigO(p^2 n)$, and an additional $\bigO(p^4 n)$ otherwise.
While \Ours requires one more matrix multiplication than Landing,  it also removes the need to compute a ``step-size safeguard'' to keep $\mX_t$ $\epsilon$-close to the manifold, and reduces hyperparameters to a minimum: base optimizer, learning rate, and whether to fix or compute~$\lambda$.
\parnote*{High precision and floats.}%
Moreover, \ours enjoys three more subtle advantages over retraction-based methods: \itemi~since it uses only matrix sums and products, it is numerically stable; \itemii~it benefits from speed-ups in matrix-multiplication primitives such as the reduction of mantissa bits \citep{henry2019leveraging}; and, most remarkably, \itemiii~it circumvents the use of \emph{batch SVD}, which is notoriously difficult to parellize on GPUs \citep{abdelfattah2026efficient}, explaining the significant speed-ups observed in \cref{fig:fig1}.

\parnote*{Other manifolds}

\paragraph{Other manifolds.} 
One natural question is whether \ours can be extended to manifolds other than the Stiefel manifold.
As we demonstrate in \cref{subsec:osos}, this is the case for the complex Stiefel manifold, where $\mX \in \sC^{p \times n}$ and transposes become adjoint matrices.
In addition, $\Stiefel{n}{n}$ and $\Stiefel{n-1}{n}$ are diffeomorphic to the orthogonal and special orthogonal groups, respectively.
Other cases require careful consideration for the manifold field %
and potential %
functions, as also discussed by \citet[\S3.5]{ablin2022fast}.

\begin{algorithm}[t]
    \small
    \caption{\ourslong.}%
    \label{alg:ours}
    \textbf{Input:} Input matrix $\mX$, Euclidean gradient $\nabla f(\mX)$ \\
    \textbf{Parameters:} Learning rate $\eta$, $\operatorname{find\ root}$ flag, base optimizer $\BO$. %
\begin{algorithmic}[1]
    \State \Let $\vG = \BO(\nabla f(\mX))$ \Comment{SGD, VAdam, \etc.}
    \State \Let $\vG = \mX\operatorname{Skew}(\mX^\top\vG)$ \Comment{Riemannian gradient}
    \State \Let $\vM = \mX - \eta \vG$ \Comment{Intermediate step}
    \If{$\operatorname{find\ root}$}
        \State \Let $\lambda \gets \operatorname{solve}_\lambda P(\lambda) = 0$ \Comment{Quartic polynomial}
    \Else
        \State \Let $\lambda = 1/2$ \Comment{$\lambda = 1/2$}
    \EndIf
    \State \Return $\vM + \lambda(\vI - \vM\vM^\top)\vM$ 
\end{algorithmic}
\end{algorithm}

\section{Related Works}
\label{sec:related-work}

\paragraph{Riemannian optimization.}
Many works try to improve the scalability of RGD \citep{absil2008optimization} without giving up on feasibility (\desideratum1). As such, these methods depend one way or another on the use of retractions whose scalability is limited. %
Among these we can find gradient-based \citep{abrudan2008steepest}, conjugate gradient \citep{abrudan2009conjugate} or trust-region methods \citep{absil2007trust}.
Interestingly, recent works %
improve scalability by optimizing a random submanifold at each step \citep{han2024riemannian,han2025efficient}.

\paragraph{Trivializations.}
An alternative line of research uses retractions as surjective functions to map unconstrained parameters as feasible solutions \citep{lezcano2019cheap,lezcano2019trivializations}.
Thus, these methods can leverage techniques in Euclidean space, \eg Adam \citep{kingma2014adam}.
Unfortunately, numerical and scalability issues inherent from retractions still persist with these approaches.

\begin{figure*}[t]
    \centering
    \includegraphics{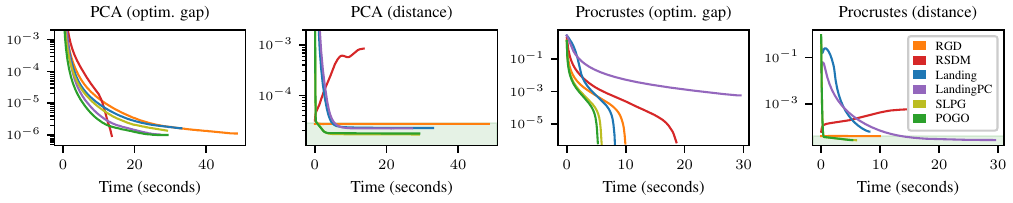}
    \caption{\figbf{\Ours reduces the optimality gap the fastest across all baselines while staying on the manifold}. Results are averaged over \num{10} independent runs and the orthogonal matrices are of size $1500\times 2000$ for PCA and $2000\times 2000$ for Procrustes.} 
    \label{fig:pca-procrustes}
\end{figure*}

\textbf{Infeasible methods} 
trade scalability with feasibility, allowing solutions slightly violate constraints and converge back to feasible solutions.
Among these methods we find Lagrangian-based methods \citep{gao2019parallelizable,gao2022orthogonalization}, as well as Landing and its variants \citep{ablin2022fast,ablin2024infeasible,vary2024optimization,song2025distributed,loconte2025square}.
Finally, we can also find proximal based methods for non-smooth optimization \citep{chen2020proximal,chen2021manifold}. 
In particular, \citet{liu2024penalty} propose a sequential linearized proximal gradient method (SLPG) where at each iteration it applies an approximate
orthogonalization step. 
When they approximate it with a Taylor expansion, their 
normal step coincides with ours when $\lambda=1/2$, and one could recover the \ours update in the case where $p \in \{1, n\}$. 
For an in-depth comparison, refer to \cref{app:sec:relation-slpg}.

\section{\ours in Action}
\label{sec:experiments}

We now evaluate the performance of \ours in relation with existing baselines. 
Namely, we are interested in verifying the three desiderata for an orthoptimizer described in \cref{sec:intro}, \ie: \ac[\textbf{D1)}]{d1}~stay \emph{close to the manifold}; \ac[\textbf{D2)}]{d2}~be computationally efficient; and \ac[\textbf{D3)}]{d3}~obtain a good \emph{downstream performance}.
While here we present the main results, experimental details and additional results can be found in \cref{app:sec:experiments}. 

\parnote*{Baselines.}

\paragraph{Baselines.}
We compare \ours to the following methods: \itemi~Riemannian Gradient Descent (RGD) with QR retraction \citep{absil2008optimization}, Riemannian Random Submanifold Descent (RSDM) with orthogonal sampling \citep{han2025efficient}, Landing \citep{ablin2022fast}, LandingPC \citep{loconte2025square}, and SLPG \citep{liu2024penalty}. 
Unless otherwise stated, we consider \ours with $\lambda$ set to $1/2$ and $\mG = \nabla f(\mX)$, \ie, directly plugging the Euclidean gradient.
Note that some baselines serve as quasi-ablations, \eg, RGD ablates the polar-retraction approximation (\cref{subsec:approximate-lambda}) and SLPG the use of adaptive base optimizers (\cref{subsec:add-optims,app:sec:relation-slpg}).

\subsection{Single-Matrix Optimization}

We focus first on ML problems involving one single orthogonal matrix, conforming a simple testbed where we can study the orthoptimizers in ideal conditions.

To this end, we follow the setup of \citep{han2025efficient} and reproduce the largest experiments therein. Additionally, we do 10 repetitions per experiment and report the average. %

\parnote*{Exp 1 matrix: Linear/Simple problems and comparison with baselines: PCA and Procrustes.}

\paragraph{Online PCA.} We consider first the problem of finding the largest eigenvectors of a given matrix $\mA\in \sR^{n \times n}$, 
\begin{equation} \label{eq:pca-problem}
    \max_{\mX\in\sR^{p\times n}} \; \norm{\mX\mA}^2 \quad \text{such that } \mX\in\Stiefel{p}{n} \eqp{.}
\end{equation}
We set $n = 2000$, $p = 1500$ and, like \citet{han2025efficient}, initialize $\mA\mA^\top$ as a positive definite matrix with a condition number of \num{1000} and exponentially decaying eigenvalues. 
For RSDM we set the submanifold dimension to \num{700}.
As the analytical solution of \cref{eq:pca-problem} is the top-$p$ eigenvectors of $\mA$, we use as performance metric the optimality gap, \ie, the relative error between the optimal loss and that found by the orthoptimizer.
We train all methods by \num{3000} iterations and early stop them if they reach an optimality gap of \num{1e-6}, reporting the training time until then and the distance of $\mX$ to the Stiefel manifold as measured by $\norm{\mX\mX^\top - \mI}$.

We can observe the evolution of the optimality gap and manifold distance on the two left-most plots of \cref{fig:pca-procrustes}. 
There, we see that \ours and LandingPC converge first, with Landing, SLPG and RGD descending at a similar rate. 
Remarkably, RSDM has the slowest start and then sharply descends. %
Regarding feasibility, %
we see every method quickly landing onto the manifold (green area, SLPG overlaps with \ours), except for RSDM which instead seems to %
consistently move away from the manifold. 
\Cref{app:subsec:ablation-pca} shows that the latter is solved  using (slow) 64-bit floating-point arithmetic.

\paragraph{Orthogonal Procrustes problem.} Next, we focus on the problem of matrix alignment \citep{gower2004procrustes}: Given two matrices $\mA \in \sR^{p \times p}$ and $\mB \in \sR^{p \times n}$, solve
\begin{equation}
    \min_{\mX\in\sR^{p\times n}} \; \norm{\mA\mX - \mB}^2 \quad \text{such that } \mX\in\Stiefel{p}{n} \eqp{.}
\end{equation}
We set $p = n = 2000$, initialize all the entries of $\mA$ and $\mB$ with independent standard Gaussian samples, and set the submanifold dimension of RSDM to \num{900}.
Just like before, we know that the analytical solution is the projection onto the Stiefel manifold of the matrix $\mA\mB$, and thus evaluate performance as the optimality gap.
This time, we train each method for a maximum of \num{3000} iterations with early stopping if the optimality gap reaches a value of \num{1e-6}.

The two rightmost plots of \cref{fig:pca-procrustes} show the optimality gap and manifold distance of all methods during training, where we can observe that \ours and SLPG converge significantly quicker, and that LandingPC exhausts all the training iterations.
It is worth noting that the learning rate of LandingPC was set to the maximum allowed to safely bound the manifold distance.
Finally, we observe that \ours and SLPG immediately go to the manifold, whereas both landing methods take considerably longer. Once again, we observe that RSDM consistently strays away from the manifold.

\subsection{Multiple-Matrix Optimization}
\label{subsec:exp-nns}

We just saw that \ours meets our desiderata (\ac[\textbf{D1-3}]{d1}) on experiments with one matrix.
Now we move to cases where orthogonality is imposed to neural-network parameters.
This is ultimately our setting of interest and where we expect the most challenges, as they involve potentially many matrices that interact between them.

\begin{figure}[t]
    \centering
    \includegraphics[width=.93\linewidth]{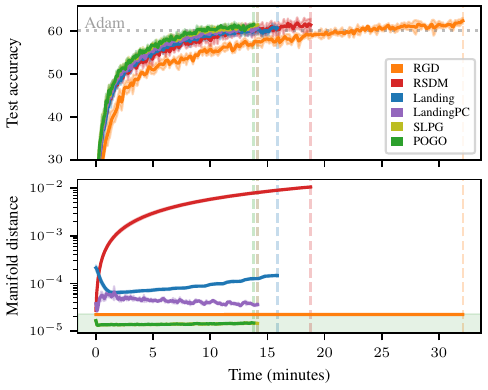}
    \caption{While all methods obtain similar test accuracies with an O-ViT \citep{fei2022vit} on CIFAR-10 \citep{krizhevsky2009learning}, \figbf{\ours is the fastest method to complete \num{10} epochs {without leaving the manifold}}. Results show average and \SI{95}{\percent} confidence intervals over \num{5} independent runs.}
    \label{fig:vit}
\end{figure}

\parnote*{Exp (Scalability): O-Vit.} 

\paragraph{Vision Transformers.}
First, we consider a classification problem on CIFAR-10 \citep{krizhevsky2009learning} using a \SI{28}{\million}-parameter Orthogonal Vision Transformer (O-ViT) \citep{fei2022vit}, a ViT with orthogonal attention matrices.
To this end, we follow the setting of \citet{han2025efficient} and train a small-size O-ViT for \num{10} epochs with all methods, independently repeating the experiment five times.
In this setting, we have \num{18} matrices with $n = p = 1024$ and we set the submanifold of RSDM to $300$ as in the original paper.

\parnote*{O-Vit results}

\Cref{fig:vit} shows the training curves of all methods \wrt test accuracy and manifold distance. %
All methods achieve similar test accuracy, slightly surpassing an unconstrained baseline trained with Adam for reference (gray dotted line).
Thus, the main difference becomes the training speed, where we observe that both landing methods and POGO are twice as fast as RGD, and \ours finishes 5 minutes quicker than RSDM.

We observe bigger discrepancies on the manifold distances. 
Namely, RSDM leaves the manifold again, this time \emph{3 orders of magnitude} further than the reference method, RGD.
Most notably, \ours stays in the manifold all training, as its distance is always lower than that of RGD, the retraction method. 
Interestingly, we see that Landing initially gets closer to the manifold and then leaves it again where its extension, LandingPC, mirrors this behavior: first increases its distance, and then consistently gets closer to the manifold. 

\parnote*{Exp (Scalability): CIFAR10 on ResNet.}

\begin{figure}[t]
    \centering
    \includegraphics[width=.93\linewidth]{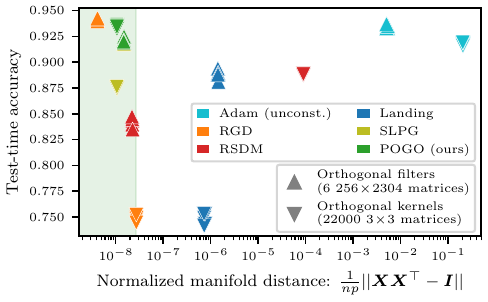}
    \caption{Normalized distance and test-time accuracy on the CNN experiment from \cref{subsec:exp-nns}. We observe that \figbf{in all instances \ours obtains similar accuracy to the unconstrained baseline while staying on the Stiefel manifold}.} \label{fig:distance-cnn}
\end{figure}

\begin{figure}[t]
    \centering
    \includegraphics[width=.93\linewidth]{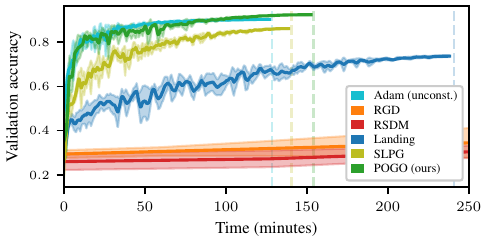}
    \caption{Evolution of the accuracy for the CNN experiment with orthogonal kernels. \figbf{\Ours can learn at the same pace as the unconstrained Adam baseline despite orthogonality constraints.}} \label{fig:cnn-training-curve}
\end{figure}

\paragraph{Convolutional neural networks.}
To test the scalability of \ours (\desideratum2), we now consider a \SI{2}{\million}-parameter CNN tailored for CIFAR-10 classification \citep{jordan202494}. 
Let $O\!\times\!I\!\times\!k\!\times k$ be the parameter dimensions of each convolutional layer, where $O$ and $I$ are the output and input dimensions, %
and $k$ is the kernel size.
Then, we devise two experiments. First, we follow prior works \citep{han2025efficient,ablin2024infeasible} and consider \emph{orthogonal filters} of size $O \times Ik^2$ by flattening the tensor, resulting in \num{6} matrices with sizes ranging from $64\!\times\!216$ up to $256\!\times\!2304$, similar to the O-ViT case. 
Second, we assume instead \emph{orthogonal kernels} \citep{ozay2016optimization} of size $k\!\times\!k$, leading to $OI$ orthogonal matrices per layer and, in total, $218624$ matrices of size $3\!\times\!3$.
In both cases, we train for \num{100} epochs, set the submanifold dimension of RSDM to \num{64} for orthogonal filters, and \num{2} for orthogonal kernels, and use \ours with VAdam.

We present the results in \cref{fig:fig1,fig:distance-cnn}, where  training time is the most remarkable metric: while \ours takes 3 minutes to train in both experiments, RGD and RSDM take double that time with orthogonal filters, and \emph{several hours} more with orthogonal kernels. 
This not only highlights the scalability of \ours, but also the lack of it for methods that rely on the QR algorithm \citep{10.1093/comjnl/4.3.265,kublanovskaya1962some}. 
We attribute the time differences with Landing to implementation details and extra computations for the learning rate.

Most remarkably, we observe that \ours consistently obtains accuracy comparable to Adam with minimal extra overhead (\desideratum3).
Also, while SLPG matches \ours in the orthogonal-filter case (both overlap), it does not match its performance in the orthogonal-kernel case, as we had to train SLPG with very low learning rates to avoid numerical errors. This can be also appreciated in the training curves shown in \cref{fig:cnn-training-curve}.
Finally, as shown in \cref{fig:distance-cnn}, these results are with no compromise in feasibility (\desideratum1): As we make the manifold dimension-invariant, we observe that most methods reach consistent distances (except for RSDM), and that \ours and SLPG stay in every case within the manifold. %

\begin{figure}[t]
    \centering
    \includegraphics{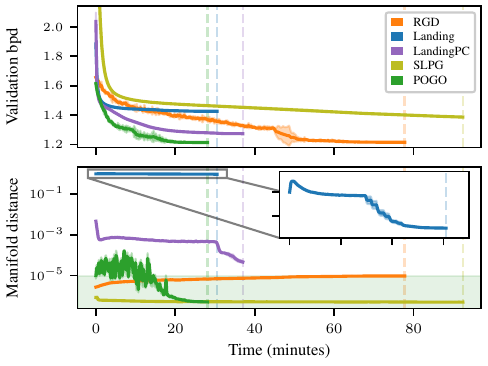}
    \caption{Training curves for validation bits-per-dimension and manifold distance for the squared unitary PC experiment. \figbf{\Ours obtains state-of-the-art results for this class of models, while converging the fastest and not leaving the Stiefel manifold.}} \label{fig:osos}
\end{figure}

\subsection{Large-Scale Orthogonality Constraints for Tractable Inference}
\label{subsec:osos}

In this last section, we introduce a new DL benchmark where \emph{large-scale orthogonality constraints are required}, thus serving as further motivation for orthoptimizers.
An additional ablation study comparing \ours with different learning rates and both policies for $\lambda$ can be found in \cref{app:subsec:ablation}.

\paragraph{Squared unitary PCs.}
Recently, \citet{loconte2025square} introduced a new class of probabilistic circuits (PCs) \citep{choi2020pc,vergari2021compositional} called \textit{squared unitary PCs}.
In summary, squared unitary PCs are expressive models which, by imposing orthogonality constraints to their parameters,  
ensure that one can obtain an already-normalized probability distribution after multiplying the PC with itself. %
For our purposes, squared unitary PCs constitute a great testbed for orthoptimizers not only due to their scalability (up to \SI{300}{\million} parameters, \desideratum2), but also because they represent an example where orthogonality is needed in practice:
Due to prohibitively large memory requirements, using unconstrained parameters and re-normalizing the squared PC is infeasible in practice.

In fact, to train squared unitary PCs \citet{loconte2025square} introduced LandingPC as a variant of Landing tailored for these models, and thus it represents the current SotA.
Moreover, these models can work with complex parameters, serving us an opportunity to test \ours in the complex Stiefel manifold. 
Therefore, we follow an identical setup as \citet{loconte2025square} and simply use each optimizer as drop-in replacements of LandingPC. 

\parnote*{Experimental setup (MNIST, num epochs, early stopping, lr scheduler, etc). Exclude RSDM.}

\paragraph{Experimental setting.} 
We train squared unitary PCs with \num{10} input units to fit the empirical distribution of the MNIST dataset~\citep{lecun2010mnist}, resulting in \num{1048} complex-valued matrices ranging from $10\!\times\!256$ up to $10\!\times\!10000$.
Following \citep{loconte2025square}, every model is trained a maximum of \num{200} epochs, and use early stopping based on the validation set as well as halve the learning rate when the loss plateaus for \num{10} epochs.
Additionally, we remove RSDM from the plots as we did not manage to obtain anything close to the performance of other orthoptimizers.

\parnote*{Results.}

\paragraph{Results.}
\Cref{fig:osos} shows the downstream performance in bits-per-dimension (lower is better) and manifold distance for all methods during training. 
Here, we can observe that RGD obtains quite good performance, although it takes it one hour and a half to do so. 
We can also observe a clear difference between Landing and LandingPC: The latter obtains great performance and consistently nears the manifold, ending nearby \textit{before early stopping kicks in}. In contrast, Landing gets stuck early on at $\epsilon = 0.5$ and the inset plot shows that it spends the rest of training slowly reducing the distance to the manifold. 
Similar to the previous case, we had to set an extremely low learning rate for SLPG not to crash, which explains its slow convergence and great feasibility.
Finally, \ours converges extremely quickly while staying really close to the manifold. \textit{Compared with RGD, \ours obtains the same performance while being  twice as fast}.

\section{Concluding Remarks}

In this work, we substantially advanced the current SoTA of orthoptimizers while preserving three key properties (\hbox{\ac[\textbf{D1-3}]{d1}}).
We have done so by introducing our \ours, which refines the ideas of Landing \citep{ablin2022fast}, resulting in an orthoptimizer which accurately approximates the optimal step size to land on the Stiefel manifold after each gradient update, and thus respecting orthogonality constraints closely. %
\ours raises the bar of how orthoptimizers can scale to real-world deep learning problems with thousands of matrices and of what they achieve in terms of dowstream performance, being finally competitive with unconstrained optimizers.
We foresee that \ours can open up several applications requiring orthogonality constraints, from quantum computing \citep{orus2019tensor} to physics applications \citep{cai2008reduction}, that have been unexplored so far.

\section*{Acknowledgments}

We acknowledge Lorenzo Loconte for his helpful feedback at the first steps of the project, Pierre Ablin for his feedback early in development, and to the \href{https://april-tools.github.io/}{april lab} for their support.
Both authors are supported by the ``UNREAL: Unified Reasoning Layer for Trustworthy ML'' project (EP/Y023838/1) selected by the ERC and funded by UKRI EPSRC.

\section*{Authors Contributions}

AJ conceived the initial idea and discussed it with AV. AJ is responsible for all the theoretical contributions, plots and experiments. AJ led the writing with help from AV who provided feedback at every stage of the project\scalebox{0.01}{ \hphantom{muchmorespace} and sneaked in citations of PCs.}.

\section*{Reproducibility Statement}
All hyperparameter values and experimental settings are detailed in the appendices and properly referenced in the main text.
The code to reproduce the experiments in this paper can be found in \href{https://github.com/april-tools/pogo-experiments}{github.com/april-tools/pogo-experiments}. %

\section*{Impact Statement}

This paper presents work whose goal is to advance the field of machine learning\ optimization. There are many potential societal consequences of our work, none of which we feel must be specifically highlighted here.

    \bibliography{references.clean}

\begin{thebibliography}{54}
\providecommand{\natexlab}[1]{#1}
\providecommand{\url}[1]{\texttt{#1}}
\expandafter\ifx\csname urlstyle\endcsname\relax
  \providecommand{\doi}[1]{doi: #1}\else
  \providecommand{\doi}{doi: \begingroup \urlstyle{rm}\Url}\fi

\bibitem[Abdelfattah \& Fasi(2026)Abdelfattah and
  Fasi]{abdelfattah2026efficient}
Abdelfattah, A. and Fasi, M.
\newblock An {Efficient} {Batch} {Solver} for the {Singular} {Value}
  {Decomposition} on {GPUs}.
\newblock \emph{ArXiv preprint}, abs/2601.17979, 2026.
\newblock URL \url{https://arxiv.org/abs/2601.17979}.

\bibitem[Ablin \& Peyr{\'e}(2022)Ablin and Peyr{\'e}]{ablin2022fast}
Ablin, P. and Peyr{\'e}, G.
\newblock Fast and accurate optimization on the orthogonal manifold without
  retraction.
\newblock In Camps{-}Valls, G., Ruiz, F. J.~R., and Valera, I. (eds.),
  \emph{International {Conference} on {Artificial} {Intelligence} and
  {Statistics}, {AISTATS} 2022, 28-30 {March} 2022, {Virtual} {Event}}, volume
  151 of \emph{Proceedings of Machine Learning Research}, pp.\  5636--5657.
  {PMLR}, 2022.
\newblock URL \url{https://proceedings.mlr.press/v151/ablin22a.html}.

\bibitem[Ablin et~al.(2024)Ablin, Vary, Gao, and Absil]{ablin2024infeasible}
Ablin, P., Vary, S., Gao, B., and Absil, P.-A.
\newblock Infeasible deterministic, stochastic, and variance-reduction
  algorithms for optimization under orthogonality constraints.
\newblock \emph{Journal of Machine Learning Research}, 25\penalty0
  (389):\penalty0 1--38, 2024.

\bibitem[Abrudan et~al.(2009)Abrudan, Eriksson, and
  Koivunen]{abrudan2009conjugate}
Abrudan, T., Eriksson, J., and Koivunen, V.
\newblock Conjugate gradient algorithm for optimization under unitary matrix
  constraint.
\newblock \emph{Signal Processing}, 89\penalty0 (9):\penalty0 1704--1714, 2009.

\bibitem[Abrudan et~al.(2008)Abrudan, Eriksson, and
  Koivunen]{abrudan2008steepest}
Abrudan, T.~E., Eriksson, J., and Koivunen, V.
\newblock Steepest descent algorithms for optimization under unitary matrix
  constraint.
\newblock \emph{IEEE Transactions on Signal Processing}, 56\penalty0
  (3):\penalty0 1134--1147, 2008.

\bibitem[Absil et~al.(2007)Absil, Baker, and Gallivan]{absil2007trust}
Absil, P.-A., Baker, C.~G., and Gallivan, K.~A.
\newblock Trust-region methods on {Riemannian} manifolds.
\newblock \emph{Foundations of Computational Mathematics}, 7\penalty0
  (3):\penalty0 303--330, 2007.

\bibitem[Absil et~al.(2008)Absil, Mahony, and Sepulchre]{absil2008optimization}
Absil, P.-A., Mahony, R., and Sepulchre, R.
\newblock \emph{Optimization algorithms on matrix manifolds}.
\newblock Princeton University Press, 2008.

\bibitem[Arioli et~al.(1996)Arioli, Codenotti, and Fassino]{arioli1996pade}
Arioli, M., Codenotti, B., and Fassino, C.
\newblock The {Pad}{\'e} method for computing the matrix exponential.
\newblock \emph{Linear algebra and its applications}, 240:\penalty0 111--130,
  1996.

\bibitem[Arjovsky et~al.(2016)Arjovsky, Shah, and Bengio]{arjovsky2016unitary}
Arjovsky, M., Shah, A., and Bengio, Y.
\newblock Unitary {Evolution} {Recurrent} {Neural} {Networks}.
\newblock In Balcan, M. and Weinberger, K.~Q. (eds.), \emph{Proceedings of the
  33nd {International} {Conference} on {Machine} {Learning}, {ICML} 2016, {New}
  {York} {City}, {NY}, {USA}, {June} 19-24, 2016}, volume~48 of \emph{{JMLR}
  Workshop and Conference Proceedings}, pp.\  1120--1128. JMLR.org, 2016.
\newblock URL \url{http://proceedings.mlr.press/v48/arjovsky16.html}.

\bibitem[Bansal et~al.(2018)Bansal, Chen, and Wang]{bansal2018can}
Bansal, N., Chen, X., and Wang, Z.
\newblock Can {We} {Gain} {More} from {Orthogonality} {Regularizations} in
  {Training} {Deep} {Networks}?
\newblock In Bengio, S., Wallach, H.~M., Larochelle, H., Grauman, K.,
  Cesa{-}Bianchi, N., and Garnett, R. (eds.), \emph{Advances in Neural
  Information Processing Systems 31: Annual Conference on Neural Information
  Processing Systems 2018, NeurIPS 2018, December 3-8, 2018, Montr{\'e}al,
  Canada}, pp.\  4266--4276, 2018.
\newblock URL
  \url{https://proceedings.neurips.cc/paper/2018/hash/bf424cb7b0dea050a42b9739eb261a3a-Abstract.html}.

\bibitem[Bishop(2006)]{bishop2006pattern}
Bishop, C.
\newblock \emph{Pattern {Recognition} and {Machine} {Learning}}.
\newblock Springer, 2006.
\newblock URL
  \url{https://www.microsoft.com/en-us/research/publication/pattern-recognition-machine-learning/}.

\bibitem[Boumal(2023)]{boumal2023introduction}
Boumal, N.
\newblock \emph{An introduction to optimization on smooth manifolds}.
\newblock Cambridge University Press, 2023.

\bibitem[Cai \& White(2008)Cai and White]{cai2008reduction}
Cai, L. and White, R.~E.
\newblock Reduction of model order based on proper orthogonal decomposition for
  lithium-ion battery simulations.
\newblock \emph{Journal of the Electrochemical Society}, 156\penalty0
  (3):\penalty0 A154, 2008.

\bibitem[Cardoso \& Laheld(2002)Cardoso and Laheld]{cardoso2002equivariant}
Cardoso, J.-F. and Laheld, B.~H.
\newblock Equivariant adaptive source separation.
\newblock \emph{IEEE Transactions on signal processing}, 44\penalty0
  (12):\penalty0 3017--3030, 2002.

\bibitem[Casado(2019)]{lezcano2019trivializations}
Casado, M.~L.
\newblock Trivializations for {Gradient}-{Based} {Optimization} on {Manifolds}.
\newblock In Wallach, H.~M., Larochelle, H., Beygelzimer, A.,
  d'Alch{\'e}{-}Buc, F., Fox, E.~B., and Garnett, R. (eds.), \emph{Advances in
  {Neural} {Information} {Processing} {Systems} 32: {Annual} {Conference} on
  {Neural} {Information} {Processing} {Systems} 2019, {NeurIPS} 2019,
  {December} 8-14, 2019, {Vancouver}, {BC}, {Canada}}, pp.\  9154--9164, 2019.
\newblock URL
  \url{https://proceedings.neurips.cc/paper/2019/hash/1b33d16fc562464579b7199ca3114982-Abstract.html}.

\bibitem[Casado \& Mart{\'i}nez{-}Rubio(2019)Casado and
  Mart{\'i}nez{-}Rubio]{lezcano2019cheap}
Casado, M.~L. and Mart{\'i}nez{-}Rubio, D.
\newblock Cheap {Orthogonal} {Constraints} in {Neural} {Networks}: {A} {Simple}
  {Parametrization} of the {Orthogonal} and {Unitary} {Group}.
\newblock In Chaudhuri, K. and Salakhutdinov, R. (eds.), \emph{Proceedings of
  the 36th {International} {Conference} on {Machine} {Learning}, {ICML} 2019,
  9-15 {June} 2019, {Long} {Beach}, {California}, {USA}}, volume~97 of
  \emph{Proceedings of Machine Learning Research}, pp.\  3794--3803. {PMLR},
  2019.
\newblock URL \url{http://proceedings.mlr.press/v97/lezcano-casado19a.html}.

\bibitem[Chen et~al.(2020)Chen, Ma, Man-Cho~So, and Zhang]{chen2020proximal}
Chen, S., Ma, S., Man-Cho~So, A., and Zhang, T.
\newblock Proximal gradient method for nonsmooth optimization over the
  {Stiefel} manifold.
\newblock \emph{SIAM Journal on Optimization}, 30\penalty0 (1):\penalty0
  210--239, 2020.

\bibitem[Chen et~al.(2021)Chen, Deng, Ma, and So]{chen2021manifold}
Chen, S., Deng, Z., Ma, S., and So, A. M.-C.
\newblock Manifold proximal point algorithms for dual principal component
  pursuit and orthogonal dictionary learning.
\newblock \emph{IEEE transactions on signal processing}, 69:\penalty0
  4759--4773, 2021.

\bibitem[Choi et~al.(2020)Choi, Vergari, and Van~den Broeck]{choi2020pc}
Choi, Y., Vergari, A., and Van~den Broeck, G.
\newblock Probabilistic {Circuits}: {A} {Unifying} {Framework} for {Tractable}
  {Probabilistic} {Modeling}.
\newblock Technical report, University of California, Los Angeles (UCLA), 2020.

\bibitem[Dongarra et~al.(2018)Dongarra, Gates, Haidar, Kurzak, Luszczek, Tomov,
  and Yamazaki]{dongarra-svd}
Dongarra, J., Gates, M., Haidar, A., Kurzak, J., Luszczek, P., Tomov, S., and
  Yamazaki, I.
\newblock The {Singular} {Value} {Decomposition}: {Anatomy} of {Optimizing} an
  {Algorithm} for {Extreme} {Scale}.
\newblock \emph{SIAM Review}, 60\penalty0 (4):\penalty0 808--865, 2018.
\newblock \doi{10.1137/17M1117732}.
\newblock URL \url{https://doi.org/10.1137/17M1117732}.

\bibitem[Edelman et~al.(1998)Edelman, Arias, and Smith]{edelman1998geometry}
Edelman, A., Arias, T.~A., and Smith, S.~T.
\newblock The geometry of algorithms with orthogonality constraints.
\newblock \emph{SIAM journal on Matrix Analysis and Applications}, 20\penalty0
  (2):\penalty0 303--353, 1998.

\bibitem[Fei et~al.(2022)Fei, Liu, Wei, and Chen]{fei2022vit}
Fei, Y., Liu, Y., Wei, X., and Chen, M.
\newblock O-vit: {Orthogonal} vision transformer.
\newblock \emph{ArXiv preprint}, abs/2201.12133, 2022.
\newblock URL \url{https://arxiv.org/abs/2201.12133}.

\bibitem[Francis(1961)]{10.1093/comjnl/4.3.265}
Francis, J. G.~F.
\newblock The {QR} {Transformation} {A} {Unitary} {Analogue} to the {LR}
  {Transformation}--{Part} 1.
\newblock \emph{The Computer Journal}, 4\penalty0 (3):\penalty0 265--271, 1961.
\newblock ISSN 0010-4620.
\newblock \doi{10.1093/comjnl/4.3.265}.
\newblock URL \url{https://doi.org/10.1093/comjnl/4.3.265}.

\bibitem[Gao et~al.(2019)Gao, Liu, and Yuan]{gao2019parallelizable}
Gao, B., Liu, X., and Yuan, Y.-x.
\newblock Parallelizable algorithms for optimization problems with
  orthogonality constraints.
\newblock \emph{SIAM Journal on Scientific Computing}, 41\penalty0
  (3):\penalty0 A1949--A1983, 2019.

\bibitem[Gao et~al.(2022)Gao, Hu, Kuang, and Liu]{gao2022orthogonalization}
Gao, B., Hu, G., Kuang, Y., and Liu, X.
\newblock An orthogonalization-free parallelizable framework for all-electron
  calculations in density functional theory.
\newblock \emph{SIAM Journal on Scientific Computing}, 44\penalty0
  (3):\penalty0 B723--B745, 2022.

\bibitem[Goli\'{n}ski et~al.(2019)Goli\'{n}ski, Lezcano-Casado, and
  Rainforth]{GoliskiImprovingNF}
Goli\'{n}ski, A., Lezcano-Casado, M., and Rainforth, T.
\newblock Improving {Normalizing} {Flows} via {Better} {Orthogonal}
  {Parameterizations}.
\newblock In \emph{{ICML} {Workshop} on {Invertible} {Neural} {Networks}},
  2019.
\newblock URL \url{https://api.semanticscholar.org/CorpusID:271877803}.

\bibitem[Gower \& Dijksterhuis(2004)Gower and
  Dijksterhuis]{gower2004procrustes}
Gower, J.~C. and Dijksterhuis, G.~B.
\newblock \emph{Procrustes problems}, volume~30.
\newblock Oxford university press, 2004.

\bibitem[Han et~al.(2024)Han, Jawanpuria, and Mishra]{han2024riemannian}
Han, A., Jawanpuria, P., and Mishra, B.
\newblock Riemannian coordinate descent algorithms on matrix manifolds.
\newblock In \emph{Forty-first {International} {Conference} on {Machine}
  {Learning}, {ICML} 2024, {Vienna}, {Austria}, {July} 21-27, 2024}.
  OpenReview.net, 2024.
\newblock URL \url{https://openreview.net/forum?id=bdKaQmrM81}.

\bibitem[Han et~al.(2025)Han, Poirion, and Takeda]{han2025efficient}
Han, A., Poirion, P.-L., and Takeda, A.
\newblock Efficient {Optimization} with {Orthogonality} {Constraint}: a
  {Randomized} {Riemannian} {Submanifold} {Method}.
\newblock In Singh, A., Fazel, M., Hsu, D., Lacoste-Julien, S., Berkenkamp, F.,
  Maharaj, T., Wagstaff, K., and Zhu, J. (eds.), \emph{Proceedings of the 42nd
  {International} {Conference} on {Machine} {Learning}}, volume 267 of
  \emph{Proceedings of Machine Learning Research}, pp.\  21814--21843. PMLR,
  2025.
\newblock URL \url{https://proceedings.mlr.press/v267/han25f.html}.

\bibitem[Henry et~al.(2019)Henry, Tang, and Heinecke]{henry2019leveraging}
Henry, G., Tang, P. T.~P., and Heinecke, A.
\newblock Leveraging the bfloat16 artificial intelligence datatype for
  higher-precision computations.
\newblock In \emph{2019 {IEEE} 26th {Symposium} on {Computer} {Arithmetic}
  ({ARITH})}, pp.\  69--76. IEEE, 2019.

\bibitem[Hyv{\"a}rinen et~al.(2001)Hyv{\"a}rinen, Hurri, and
  Hoyer]{hyvarinen2001independent}
Hyv{\"a}rinen, A., Hurri, J., and Hoyer, P.~O.
\newblock Independent component analysis.
\newblock In \emph{Natural {Image} {Statistics}: {A} {Probabilistic} {Approach}
  to {Early} {Computational} {Vision}}, pp.\  151--175. Springer, 2001.

\bibitem[Javaloy \& Valera(2022)Javaloy and Valera]{javaloy2021rotograd}
Javaloy, A. and Valera, I.
\newblock {RotoGrad}: {Gradient} {Homogenization} in {Multitask} {Learning}.
\newblock In \emph{The {Tenth} {International} {Conference} on {Learning}
  {Representations}, {ICLR} 2022, {Virtual} {Event}, {April} 25-29, 2022}.
  OpenReview.net, 2022.
\newblock URL \url{https://openreview.net/forum?id=T8wHz4rnuGL}.

\bibitem[Jordan(2024)]{jordan202494}
Jordan, K.
\newblock 94\% on {CIFAR}-10 in 3.29 {Seconds} on a {Single} {GPU}.
\newblock \emph{ArXiv preprint}, abs/2404.00498, 2024.
\newblock URL \url{https://arxiv.org/abs/2404.00498}.

\bibitem[Jordan et~al.(2024)Jordan, Jin, Boza, You, Cesista, Newhouse, and
  Bernstein]{jordan2024muon}
Jordan, K., Jin, Y., Boza, V., You, J., Cesista, F., Newhouse, L., and
  Bernstein, J.
\newblock Muon: {An} optimizer for hidden layers in neural networks, 2024.
\newblock URL \url{https://kellerjordan.github.io/posts/muon/}.

\bibitem[Kiani et~al.(2022)Kiani, Balestriero, LeCun, and
  Lloyd]{kiani2022projunn}
Kiani, B.~T., Balestriero, R., LeCun, Y., and Lloyd, S.
\newblock {projUNN}: efficient method for training deep networks with unitary
  matrices.
\newblock In Koyejo, S., Mohamed, S., Agarwal, A., Belgrave, D., Cho, K., and
  Oh, A. (eds.), \emph{Advances in {Neural} {Information} {Processing}
  {Systems} 35: {Annual} {Conference} on {Neural} {Information} {Processing}
  {Systems} 2022, {NeurIPS} 2022, {New} {Orleans}, {LA}, {USA}, {November} 28 -
  {December} 9, 2022}, 2022.
\newblock URL
  \url{http://papers.nips.cc/paper\_files/paper/2022/hash/5d1a0188e18c1d74a0f8d6eb5ecede4f-Abstract-Conference.html}.

\bibitem[Kingma \& Ba(2015)Kingma and Ba]{kingma2014adam}
Kingma, D.~P. and Ba, J.
\newblock Adam: {A} {Method} for {Stochastic} {Optimization}.
\newblock In Bengio, Y. and LeCun, Y. (eds.), \emph{3rd {International}
  {Conference} on {Learning} {Representations}, {ICLR} 2015, {San} {Diego},
  {CA}, {USA}, {May} 7-9, 2015, {Conference} {Track} {Proceedings}}, 2015.
\newblock URL \url{http://arxiv.org/abs/1412.6980}.

\bibitem[Krizhevsky et~al.(2009)Krizhevsky, Hinton,
  et~al.]{krizhevsky2009learning}
Krizhevsky, A., Hinton, G., et~al.
\newblock Learning multiple layers of features from tiny images.
\newblock 2009.

\bibitem[Kublanovskaya(1962)]{kublanovskaya1962some}
Kublanovskaya, V.~N.
\newblock On some algorithms for the solution of the complete eigenvalue
  problem.
\newblock \emph{USSR Computational Mathematics and Mathematical Physics},
  1\penalty0 (3):\penalty0 637--657, 1962.

\bibitem[LeCun et~al.(2010)LeCun, Cortes, and Burges]{lecun2010mnist}
LeCun, Y., Cortes, C., and Burges, C.
\newblock {MNIST handwritten digit database}.
\newblock \emph{ATT Labs [Online]. Available:
  http://yann.lecun.com/exdb/mnist}, 2, 2010.

\bibitem[Li et~al.(2019)Li, Jia, Wen, Liu, and Tao]{li2019orthogonal}
Li, S., Jia, K., Wen, Y., Liu, T., and Tao, D.
\newblock Orthogonal deep neural networks.
\newblock \emph{IEEE transactions on pattern analysis and machine
  intelligence}, 43\penalty0 (4):\penalty0 1352--1368, 2019.

\bibitem[Ling et~al.(2022)Ling, Sharp, and Jacobson]{ling2022vectoradam}
Ling, S., Sharp, N., and Jacobson, A.
\newblock {VectorAdam} for {Rotation} {Equivariant} {Geometry} {Optimization}.
\newblock In Koyejo, S., Mohamed, S., Agarwal, A., Belgrave, D., Cho, K., and
  Oh, A. (eds.), \emph{Advances in {Neural} {Information} {Processing}
  {Systems} 35: {Annual} {Conference} on {Neural} {Information} {Processing}
  {Systems} 2022, {NeurIPS} 2022, {New} {Orleans}, {LA}, {USA}, {November} 28 -
  {December} 9, 2022}, 2022.
\newblock URL
  \url{http://papers.nips.cc/paper\_files/paper/2022/hash/1a774f3555593986d7d95e4780d9e4f4-Abstract-Conference.html}.

\bibitem[Liu et~al.(2024)Liu, Xiao, and Yuan]{liu2024penalty}
Liu, X., Xiao, N., and Yuan, Y.-x.
\newblock A penalty-free infeasible approach for a class of nonsmooth
  optimization problems over the {Stiefel} manifold.
\newblock \emph{Journal of Scientific Computing}, 99\penalty0 (2):\penalty0 30,
  2024.

\bibitem[Loconte et~al.(2024)Loconte, Sladek, Mengel, Trapp, Solin, Gillis, and
  Vergari]{loconte2024subtractive}
Loconte, L., Sladek, A.~M., Mengel, S., Trapp, M., Solin, A., Gillis, N., and
  Vergari, A.
\newblock Subtractive {Mixture} {Models} via {Squaring}: {Representation} and
  {Learning}.
\newblock In \emph{The {Twelfth} {International} {Conference} on {Learning}
  {Representations}, {ICLR} 2024, {Vienna}, {Austria}, {May} 7-11, 2024}.
  OpenReview.net, 2024.
\newblock URL \url{https://openreview.net/forum?id=xIHi5nxu9P}.

\bibitem[Loconte et~al.(2025)Loconte, Mengel, and Vergari]{loconte2025sum}
Loconte, L., Mengel, S., and Vergari, A.
\newblock Sum of squares circuits.
\newblock In \emph{Proceedings of the {AAAI} {Conference} on {Artificial}
  {Intelligence}}, volume~39, pp.\  19077--19085, 2025.

\bibitem[Loconte et~al.(2026)Loconte, Javaloy, and Vergari]{loconte2025square}
Loconte, L., Javaloy, A., and Vergari, A.
\newblock How to {Square} {Tensor} {Networks} and {Circuits} {Without}
  {Squaring} {Them}.
\newblock In \emph{The {Fourteenth} {International} {Conference} on {Learning}
  {Representations}}, 2026.
\newblock URL \url{https://openreview.net/forum?id=gHPRSPxIsk}.

\bibitem[Moler \& Loan(2006)Moler and Loan]{moler-exponential}
Moler, C. and Loan, C.
\newblock Nineteen {Dubious} {Ways} to {Compute} the {Exponential} of a
  {Matrix}, {Twenty}-{Five} {Years} {Later}.
\newblock \emph{SIAM Review}, 45:\penalty0 3--49, 2006.
\newblock \doi{10.1137/S00361445024180}.

\bibitem[Morgan \& Bourlard(1989)Morgan and Bourlard]{morgan1989generalization}
Morgan, N. and Bourlard, H.
\newblock Generalization and parameter estimation in feedforward nets: {Some}
  experiments.
\newblock \emph{Advances in neural information processing systems}, 2, 1989.

\bibitem[Or{\'u}s(2019)]{orus2019tensor}
Or{\'u}s, R.
\newblock Tensor networks for complex quantum systems.
\newblock \emph{Nature Reviews Physics}, 1\penalty0 (9):\penalty0 538--550,
  2019.

\bibitem[Ozay \& Okatani(2016)Ozay and Okatani]{ozay2016optimization}
Ozay, M. and Okatani, T.
\newblock Optimization on submanifolds of convolution kernels in cnns.
\newblock \emph{ArXiv preprint}, abs/1610.07008, 2016.
\newblock URL \url{https://arxiv.org/abs/1610.07008}.

\bibitem[Song et~al.(2025)Song, Li, Gao, and Yuan]{song2025distributed}
Song, Y., Li, P., Gao, B., and Yuan, K.
\newblock Distributed {Retraction}-{Free} and {Communication}-{Efficient}
  {Optimization} on the {Stiefel} {Manifold}.
\newblock \emph{ArXiv preprint}, abs/2506.02879, 2025.
\newblock URL \url{https://arxiv.org/abs/2506.02879}.

\bibitem[Stiefel(1935)]{stiefel1935richtungsfelder}
Stiefel, E.
\newblock \emph{Richtungsfelder und {Fernparallelismus} in n-dimensionalen
  {Mannigfaltigkeiten}}.
\newblock PhD thesis, ETH Zurich, 1935.

\bibitem[Vary et~al.(2024)Vary, Ablin, Gao, and Absil]{vary2024optimization}
Vary, S., Ablin, P., Gao, B., and Absil, P.
\newblock Optimization without {Retraction} on the {Random} {Generalized}
  {Stiefel} {Manifold}.
\newblock In \emph{Forty-first {International} {Conference} on {Machine}
  {Learning}, {ICML} 2024, {Vienna}, {Austria}, {July} 21-27, 2024}.
  OpenReview.net, 2024.
\newblock URL \url{https://openreview.net/forum?id=QLtxj3erlJ}.

\bibitem[Vergari et~al.(2021)Vergari, Choi, Liu, Teso, and den
  Broeck]{vergari2021compositional}
Vergari, A., Choi, Y., Liu, A., Teso, S., and den Broeck, G.~V.
\newblock A {Compositional} {Atlas} of {Tractable} {Circuit} {Operations} for
  {Probabilistic} {Inference}.
\newblock In Ranzato, M., Beygelzimer, A., Dauphin, Y.~N., Liang, P., and
  Vaughan, J.~W. (eds.), \emph{Advances in {Neural} {Information} {Processing}
  {Systems} 34: {Annual} {Conference} on {Neural} {Information} {Processing}
  {Systems} 2021, {NeurIPS} 2021, {December} 6-14, 2021, virtual}, pp.\
  13189--13201, 2021.
\newblock URL
  \url{https://proceedings.neurips.cc/paper/2021/hash/6e01383fd96a17ae51cc3e15447e7533-Abstract.html}.

\bibitem[Wang et~al.(2020)Wang, Chen, Chakraborty, and Yu]{wang2020orthogonal}
Wang, J., Chen, Y., Chakraborty, R., and Yu, S.~X.
\newblock Orthogonal {Convolutional} {Neural} {Networks}.
\newblock In \emph{2020 {IEEE/CVF} {Conference} on {Computer} {Vision} and
  {Pattern} {Recognition}, {CVPR} 2020, {Seattle}, {WA}, {USA}, {June} 13-19,
  2020}, pp.\  11502--11512. {IEEE}, 2020.
\newblock \doi{10.1109/CVPR42600.2020.01152}.
\newblock URL \url{https://doi.org/10.1109/CVPR42600.2020.01152}.

\end{thebibliography}
    \bibliographystyle{icml2026}

    \clearpage
    \appendix
    \crefalias{section}{appendix}
    \crefalias{subsection}{subappendix}
    \onecolumn

    \renewcommand{\partname}{}  %
	\part{Appendix} %
	\parttoc %
	\clearpage

    \counterwithin{table}{section}
    \counterwithin{figure}{section}
    \renewcommand{\thetable}{\thesection.\arabic{table}}
    \renewcommand{\thefigure}{\thesection.\arabic{figure}}

\section{Theoretical Results}
\label{app:sec:proofs}

\subsection{Auxiliary Results}

\begin{lemma} \label{app:prop:eigenvalue-epsilon}
    If $\mX \in \Stiefel{p}{n}[\epsilon]$ then %
    $\sqrt{\max(0, 1-\epsilon/\sqrt{p})} \leq \norm{\mX}_2 \leq \sqrt{1+\epsilon/\sqrt{p}}$\eqp{.}
\end{lemma}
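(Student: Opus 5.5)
The plan is to relate the spectral norm of $\mX$ to the eigenvalues of the symmetric positive semidefinite matrix $\mX\mX^\top$. The membership $\mX \in \Stiefel{p}{n}[\epsilon]$ means $\norm{\mX\mX^\top - \mI}\leq\epsilon$, with $\norm{\cdot}$ the Frobenius norm. Write $\sigma_1 \geq \dots \geq \sigma_p \geq 0$ for the singular values of $\mX$. Then $\mX\mX^\top$ has eigenvalues $\sigma_i^2$, and $\mX\mX^\top-\mI$ is symmetric with eigenvalues $\sigma_i^2 - 1$. Hence
\begin{equation*}
    \norm{\mX\mX^\top - \mI}^2 = \sum_{i=1}^p (\sigma_i^2-1)^2 \leq \epsilon^2 .
\end{equation*}

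The key step is to turn this sum bound into a bound on each eigenvalue deviation $|\sigma_i^2 - 1|$. The trivial bound $|\sigma_i^2-1|\leq\epsilon$ does not produce the $1/\sqrt{p}$ factor in the statement. The natural route is to compare the Frobenius and spectral norms, for instance through Cauchy--Schwarz, $\sum_i|\sigma_i^2-1| \leq \sqrt{p}\,\epsilon$. However, that gives $\sqrt{p}\,\epsilon$, not $\epsilon/\sqrt{p}$.

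This exposes the main obstacle. For a single index the only available bound is $|\sigma_i^2-1|\leq\epsilon$, and it is attained by the example $\sigma_1^2 = 1+\epsilon$ with all other $\sigma_i = 1$. So the statement as literally written, with the Frobenius norm and $\epsilon/\sqrt{p}$, is false for $p>1$. It can only hold under a normalized notion of distance. The intended definition of $\Stiefel{p}{n}[\epsilon]$ is most likely dimension-normalized, $\norm{\mX\mX^\top-\mI}\leq \epsilon/\sqrt{p}$; the experiments mention a ``dimension-invariant'' manifold distance, which supports this reading. Alternatively, the statement may simply intend $\epsilon$ in place of $\epsilon/\sqrt{p}$.

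I would fix the definition to be whichever of these makes $\max_i|\sigma_i^2-1|\leq\delta$ hold with $\delta=\epsilon/\sqrt{p}$. Under the normalized definition this is immediate, since the spectral norm is at most the Frobenius norm:
\begin{equation*}
    \max_i|\sigma_i^2-1| \leq \norm{\mX\mX^\top - \mI} \leq \epsilon/\sqrt{p} = \delta .
\end{equation*}
The rest is routine. From $\sigma_1^2 \leq 1+\delta$ we get $\norm{\mX}_2 = \sigma_1 \leq \sqrt{1+\delta}$. From $\sigma_1^2 \geq \sigma_p^2 \geq 1-\delta$, together with $\sigma_1 \geq 0$, we get $\norm{\mX}_2 \geq \sqrt{\max(0,1-\delta)}$. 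The $\max(0,\cdot)$ only guards against taking the square root of a negative number when $\delta > 1$.

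In the write-up I would state the normalization convention explicitly at the start, so that the two displayed inequalities follow in two lines.
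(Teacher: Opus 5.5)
Your diagnosis is correct, and the gap is in the paper's proof, not yours. The paper uses the unnormalized Frobenius hypothesis $\norm{\mX\mX^\top-\mI}\le\epsilon$, the same one as in \cref{prop:general-polynomial-bound}. Like you, it obtains $\sum_i(\sigma_i^2-1)^2\le\epsilon^2$, and then writes $\sum_i(\sigma_i^2-1)^2\le p(\norm{\mX}_2^2-1)^2\le\epsilon^2$. The first inequality is false in general (take $p=2$, $\sigma_1=1$, $\sigma_2=0$). Even if it held, it would not yield the second inequality, which is the one actually needed. That would require the reverse averaging inequality $p(\norm{\mX}_2^2-1)^2\le\sum_i(\sigma_i^2-1)^2$, and your example with $\sigma_1^2=1+\epsilon$ and all other $\sigma_i=1$ violates it.

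So the factor $1/\sqrt{p}$ comes from that faulty step, not from a normalized definition of $\Stiefel{p}{n}[\epsilon]$. The dimension-invariant distance appears only in the experimental plots. Of your two repairs, the one consistent with the rest of the paper is the second: keep the Frobenius hypothesis and replace $\epsilon/\sqrt{p}$ by $\epsilon$ in the upper bound. Your spectral-versus-Frobenius argument proves this at once, and your example shows it is tight.

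One refinement: only the upper bound fails. If $\norm{\mX}_2\le1$, then every $\sigma_i\le\sigma_1\le1$, so $\sigma_1$ is the singular value closest to $1$. The averaging inequality $p(\sigma_1^2-1)^2\le\sum_i(\sigma_i^2-1)^2\le\epsilon^2$ therefore does hold in that case. If $\norm{\mX}_2>1$, the lower bound is trivial. Hence the correct statement is $\sqrt{\max(0,1-\epsilon/\sqrt{p})}\le\norm{\mX}_2\le\sqrt{1+\epsilon}$, and both bounds are tight.

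Note also where the lemma is used downstream. \cref{app:prop:rewrite-svd} needs only $\norm{\mDelta}\le\epsilon$ for $\mDelta=\operatorname{diag}(\sigma_i-1)$. This follows directly by summing $(\sigma_i-1)^2\le(\sigma_i^2-1)^2$ (valid since $\sigma_i+1\ge1$) over $i$, with no per-singular-value bound needed.
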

\begin{proof}
    Assume $\mX \in \Stiefel{p}{n}[\epsilon]$ and let us denote by $\sigma_\indexone(\mX)$ the \nth{\indexone} singular value of $\mX$. Then, $\norm{\mX\mX^\top - \mI} \leq \epsilon$.
    \begin{align}
        \norm{\mX\mX^\top - \mI}^2 
        &= \norm{\mX\mX^\top}^4 + \norm{\mI}^2 - 2 \innerp{\mX\mX^\top}{\mI} \\
        &= \sum_\indexone \sigma_\indexone(\mX)^2 + \sum_\indexone 1 - 2 \innerfro{\mX\mX^\top}{\mI} \\
        &= \sum_\indexone \sigma_\indexone(\mX)^2 + \sum_\indexone 1 - 2 \sum_\indexone \sigma_\indexone(\mX)^2 \\
        &= \sum_\indexone \left(\sigma_\indexone(\mX)^4 + 1 - 2 \sigma_\indexone(\mX)^2 \right) \\
        &= \sum_\indexone \left(\sigma_\indexone(\mX)^2 - 1\right)^2 \leq \epsilon^2
    \end{align}

    Therefore
    \begin{gather}
        \sum_\indexone \left(\sigma_i(\mX)^2 - 1\right)^2 \leq p \left(\norm{\mX}_2^2 - 1\right)^2 \leq \epsilon^2 \Rightarrow \sqrt{p}\abs{\norm{\mX}_2^2 - 1} \leq \epsilon \\
        -\epsilon/\sqrt{p} \leq \norm{\mX}_2^2 - 1 \leq \epsilon/\sqrt{p} \Rightarrow 1-\epsilon/\sqrt{p} \leq \norm{\mX}_2^2 \leq 1 + \epsilon/\sqrt{p}\\
         \sqrt{\max(0, 1-\epsilon/\sqrt{p})} \leq \norm{\mX}_2 \leq \sqrt{1 + \epsilon/\sqrt{p}}
    \end{gather}
\end{proof}

\begin{proposition} \label{app:prop:rewrite-svd}
    If $\mX \in \Stiefel{p}{n}[\epsilon]$ then we can write $\mX = \mU (\mI + \mDelta) \mV^\top$ where $\norm{\mDelta} \leq \epsilon$ and $\mU, \mV \in \Stiefel{p}{n}$\eqp{.}    
\end{proposition}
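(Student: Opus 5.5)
The plan is to take the thin singular value decomposition of $\mX$ and read $\mDelta$ off the singular values. Write $\mX = \mU \mSigma \mV^\top$, where $\mU \in \sR^{p\times p}$ is orthogonal, $\mSigma = \operatorname{diag}(\sigma_1(\mX),\dots,\sigma_p(\mX))$ has nonnegative entries, and $\mV \in \sR^{n\times p}$ has orthonormal columns. Thus $\mU$ and $\mV^\top$ are row-orthogonal, i.e. they are the Stiefel factors in the statement, with $\mU \in \Stiefel{p}{p}$ and $\mV^\top \in \Stiefel{p}{n}$. Then define
\begin{equation*}
    \mDelta \coloneqq \mSigma - \mI \eqp{,}
\end{equation*}
so that $\mX = \mU(\mI + \mDelta)\mV^\top$ holds by construction. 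The factor $\mDelta$ is diagonal with entries $\sigma_\indexone(\mX) - 1$. What remains is the bound $\norm{\mDelta} \leq \epsilon$.

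For this I will reuse the identity already derived in the proof of \cref{app:prop:eigenvalue-epsilon}. Since $\mX\mX^\top = \mU\mSigma^2\mU^\top$ and the Frobenius norm is unitarily invariant,
\begin{equation*}
    \norm{\mX\mX^\top - \mI}^2 = \norm{\mSigma^2 - \mI}^2 = \sum_\indexone \left(\sigma_\indexone(\mX)^2 - 1\right)^2 \leq \epsilon^2 \eqp{.}
\end{equation*}
The key elementary step is the scalar inequality $\abs{\sigma - 1} \leq \abs{\sigma - 1}(\sigma + 1) = \abs{\sigma^2 - 1}$. It holds because $\sigma \geq 0$ implies $\sigma + 1 \geq 1$. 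Applying it entrywise gives
\begin{equation*}
    \norm{\mDelta}^2 = \sum_\indexone (\sigma_\indexone(\mX) - 1)^2 \leq \sum_\indexone (\sigma_\indexone(\mX)^2 - 1)^2 \leq \epsilon^2 \eqp{,}
\end{equation*}
which is the claim.

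There is no real obstacle here. The only points needing care are, first, using the thin SVD so that $\mSigma$ is square $p\times p$, which requires the wide assumption $p \leq n$. Second, one should state precisely in which sense $\mU$ and $\mV$ are "Stiefel": $\mU$ is a square orthogonal matrix, and $\mV^\top$ is row-orthogonal. If a symmetric perturbation form is useful later, note that $\mDelta$ is in fact diagonal, hence symmetric, and that $\norm{\mDelta}_2 \leq \norm{\mDelta} \leq \epsilon$ as well.
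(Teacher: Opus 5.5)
Your proof is correct, and it reaches the bound by a different route from the paper's, one that is also sounder. Both arguments start from the thin SVD and take $\mDelta = \mSigma - \mI$. The difference is in how $\norm{\mDelta}$ is controlled.

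The paper first bounds every singular value uniformly. It invokes \cref{app:prop:eigenvalue-epsilon} to get a bound of order $\epsilon/\sqrt{p}$ on each $\abs{\sigma_i(\mX)^2 - 1}$. It then passes to the Frobenius norm through $\norm{\mDelta} \leq \sqrt{p}\,\norm{\mDelta}_2$.

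You instead compare the two Frobenius sums term by term. You use the exact identity $\norm{\mX\mX^\top - \mI}^2 = \sum_i (\sigma_i(\mX)^2 - 1)^2$ together with $\abs{\sigma - 1} \leq \abs{\sigma^2 - 1}$ for $\sigma \geq 0$.

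Your route is dimension-free and needs nothing beyond that identity. The paper's route depends on a uniform per-singular-value estimate that the hypothesis does not imply when $p \geq 2$. For example, a matrix with $\sigma_1(\mX)^2 = 1 + \epsilon$ and all other singular values equal to $1$ satisfies $\norm{\mX\mX^\top - \mI} = \epsilon$. Yet it violates $\norm{\mX}_2^2 \leq 1 + \epsilon/\sqrt{p}$.

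The bound that does hold uniformly is $\abs{\sigma_i(\mX)^2 - 1} \leq \epsilon$. Along the paper's path, that would only give $\norm{\mDelta} \leq \sqrt{p}\,\epsilon$. So your term-by-term comparison is what actually delivers the stated constant $\epsilon$.

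Your clarification of the statement's loose wording is also the right reading: $\mU$ is square orthogonal and $\mV^\top \in \Stiefel{p}{n}$.
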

\begin{proof}
    Using \cref{app:prop:eigenvalue-epsilon} we know that each singular value of $\mX$ can be written as $\sigma_\indexone(\mX) = \sqrt{1 + \upsilon_\indexone}$ where $\abs{\upsilon_\indexone} \leq \hat\epsilon \coloneqq \epsilon\sqrt{p}$.

    We can rewrite $\sigma_\indexone(\mX)$ as $\sigma_\indexone(\mX) = \pm 1 + \sqrt{1 + \upsilon_\indexone} = 1 + (\sqrt{1 + \upsilon_\indexone} - 1) = 1 + \delta_\indexone$ and bound $\delta_\indexone$:
    \begin{equation}
        \sqrt{\max(0, 1 - \hat\epsilon)} - 1 \leq \delta_\indexone \leq \sqrt{1 + \hat\epsilon} - 1
    \end{equation}
    And, since $\sqrt{\max(0, 1+x)} - 1 \leq x$ if $x \geq 0$ and $x \leq \sqrt{\max(0, 1+x)} - 1$ otherwise:
    \begin{equation}
        -\hat\epsilon \leq \sqrt{\max(0, 1 - \hat\epsilon)} - 1 \leq \delta_\indexone \leq \sqrt{1 + \hat\epsilon} - 1 \leq \hat\epsilon \eqp{.}
    \end{equation}

    We can thus rewrite the singular values of $\mX$ as $\sigma_\indexone(\mX) = 1 + \delta_i$ with $\abs{\delta_i} \leq \hat\epsilon$ and its singular value decomposition as $\mX = \mU (\mI + \mDelta) \mV^\top$ with $\norm{\mDelta} \leq \sqrt{p} \norm{\mDelta}_2 = \hat\epsilon \sqrt{p} = \epsilon$\eqp{.}
\end{proof}

\begin{proposition} \label{app:prop:norm-symmetric}
    For any square matrix $\mX$, we always have that $\norm{\Skew{\mX}} \leq \norm{\mX}$ and $\norm{\Sym{\mX}} \leq \norm{\mX}$\eqp{.}
\end{proposition}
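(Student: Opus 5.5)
The plan is to use the orthogonal decomposition of $\mX$ into its skew-symmetric and symmetric parts and then apply the Pythagorean identity for the Frobenius inner product. Write $\mX = \Skew(\mX) + \Sym(\mX)$, with $\Skew(\mX) = \frac12(\mX - \mX^\top)$ and $\Sym(\mX) = \frac12(\mX + \mX^\top)$. The first step is to show that these two pieces are orthogonal under $\innerp{\cdot}{\cdot}$.

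To see this, take any skew-symmetric $\mA$ and symmetric $\mB$. By cyclicity of the trace and invariance under transposition,
\[
\trace(\mB^\top\mA) = \trace(\mB\mA) = \trace\big((\mB\mA)^\top\big) = \trace(\mA^\top\mB^\top) = -\trace(\mA\mB) = -\trace(\mB\mA).
\]
Hence $\innerp{\mA}{\mB} = 0$.

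Expanding the squared norm then gives
\[
\norm{\mX}^2 = \norm{\Skew(\mX)}^2 + \norm{\Sym(\mX)}^2 + 2\innerp{\Skew(\mX)}{\Sym(\mX)} = \norm{\Skew(\mX)}^2 + \norm{\Sym(\mX)}^2,
\]
because the cross term vanishes. Both summands are nonnegative, so each is at most $\norm{\mX}^2$, and taking square roots yields both claimed inequalities.

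As an alternative or sanity check, the triangle inequality together with $\norm{\mX^\top} = \norm{\mX}$ gives $\norm{\frac12(\mX \pm \mX^\top)} \leq \frac12(\norm{\mX} + \norm{\mX^\top}) = \norm{\mX}$ directly. This route does not need orthogonality at all, which makes it the most robust version of the argument.

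None of the steps presents a real obstacle. The only point needing care is confirming that the Frobenius inner product as defined, $\trace(\mY^\top\mX)$, is invariant under transposing both arguments. In the complex case one would use the conjugate transpose, and the same argument goes through with Hermitian and skew-Hermitian parts.
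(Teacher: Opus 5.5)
Your proof is correct and follows the same route as the paper: split $\mX$ into its symmetric and skew-symmetric parts, expand $\norm{\mX}^2$, drop the vanishing cross term, and conclude that each part has squared norm at most $\norm{\mX}^2$. You go slightly further by proving, via the trace identity, that symmetric and skew-symmetric matrices are Frobenius-orthogonal, which the paper only asserts; your triangle-inequality alternative is also valid.
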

\begin{proof}
    The space of square matrices decomposes as the direct sum of symmetric and skew-symmetric matrices, \ie, $M_p = \operatorname{Sym}_p \oplus \operatorname{Skew}_p$\eqp{.}
    Call $\Sym(\mX) = \frac{1}{2}(\mX + \mX^\top)$ and $\Skew(\mX) = \frac{1}{2}(\mX - \mX^\top)$\eqp{.}
    Then, for any square matrix $\mX$:
    \begin{align}
        \mX &= \frac{2}{2}(\mX \pm \mX^\top) = \frac{1}{2} (\mX + \mX^\top) + \frac{1}{2} (\mX - \mX^\top) = \Sym(\mX) + \Skew(\mX) \eqp{,}
    \end{align}
    and therefore
    \begin{align}
        \norm{\mX}^2 &= \norm{\Sym(\mX)}^2 + \norm{\Skew(\mX)}^2 + \cancelto{0}{2\innerp{\Sym(\mX)}{\Skew(\mX)}} \\
        \norm{\Sym(\mX)}^2 &= \norm{\mX}^2 - \norm{\Skew(\mX)}^2 \leq \norm{\mX}^2 \Rightarrow \norm{\Sym(\mX)} \leq \norm{\mX} \\
        \norm{\Skew(\mX)}^2 &= \norm{\mX}^2 - \norm{\Sym(\mX)}^2 \leq \norm{\mX}^2 \Rightarrow \norm{\Skew(\mX)} \leq \norm{\mX} \eqp{.}
    \end{align}
\end{proof}

\begin{proposition} \label{app:prop:bound-trace}
	Assume that $\mX \in \Stiefel{p}{n}$, $\norm{\mG} \leq L$ and that $\xi \coloneqq \eta L < 1$, then $s_\indexone = \eta^\indexone \trace(\mS^\indexone) \leq \xi^\indexone$.
\end{proposition}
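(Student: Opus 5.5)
The plan is to first bound the Frobenius norm of $\mS$ by $L$, and then pass from $\norm{\mS}$ to $\trace(\mS^\indexone)$ through the spectrum of the skew-symmetric matrix $\mS$. Throughout, $\mS = \Skew(\mX^\top\mG)$, which is an $n\times n$ skew-symmetric matrix.

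\emph{Step 1: $\norm{\mS}\le L$.} By \cref{app:prop:norm-symmetric}, $\norm{\mS} = \norm{\Skew(\mX^\top\mG)} \leq \norm{\mX^\top\mG}$. Since $\mX\in\Stiefel{p}{n}$, we have $\mX\mX^\top=\mI_p$, so the columns of $\mX^\top$ are orthonormal and $\norm{\mX^\top}_2 = 1$. The standard inequality $\norm{\mA\mB}\le\norm{\mA}_2\norm{\mB}$ then gives
\begin{equation*}
  \norm{\mX^\top\mG}\le\norm{\mG}\le L,
\end{equation*}
hence $\eta\norm{\mS}\le \eta L = \xi$. An equivalent way to see the middle step is $\norm{\mX^\top\mG}^2=\trace(\mG^\top\mX\mX^\top\mG)=\norm{\mG}^2$.

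\emph{Step 2: spectral bound on the traces.} As $\mS$ is real skew-symmetric, it is normal and its eigenvalues are $\pm i\mu_k$ with $\mu_k\in\sR$. Also $\sum_k \mu_k^2 = \norm{\mS}^2$, because the singular values of a normal matrix are the moduli of its eigenvalues. Therefore
\begin{equation*}
  \trace(\mS^\indexone)=\sum_k (i\mu_k)^\indexone .
\end{equation*}
We split on the parity of $\indexone$.
\begin{itemize}
  \item For odd $\indexone$, $\mS^\indexone$ is itself skew-symmetric, so $\trace(\mS^\indexone)=0\le\xi^\indexone$ trivially; this covers $\indexone=1$.
  \item For even $\indexone\ge 2$, we have $\abs{\trace(\mS^\indexone)}\le\sum_k\abs{\mu_k}^\indexone$. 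Since the $\ell_\indexone$ norm of a vector is at most its $\ell_2$ norm when $\indexone\ge2$, this is at most $\bigl(\sum_k\mu_k^2\bigr)^{\indexone/2}=\norm{\mS}^\indexone$.
\end{itemize}
Multiplying by $\eta^\indexone$ and using Step 1 gives $s_\indexone\le\abs{s_\indexone}\le(\eta\norm{\mS})^\indexone\le\xi^\indexone$.

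The only delicate point is Step 2: the trace of a power can only be controlled by the norm once we use normality, so that eigenvalue moduli coincide with singular values, together with the $\ell_\indexone\le\ell_2$ comparison. A fallback is Cauchy--Schwarz, $\abs{\trace(\mS^\indexone)}\le\norm{\mS}\,\norm{\mS^{\indexone-1}}$, combined with submultiplicativity of the Frobenius norm, which gives the same bound. In passing, the same argument yields $\eta^2\norm{\mS^2}\le\eta^2\norm{\mS}^2\le\xi^2$, which is the bound used after \cref{ass:bound-G}. The hypothesis $\xi<1$ is not needed for the inequality itself; it only makes the powers $\xi^\indexone$ decrease in $\indexone$, as used later.
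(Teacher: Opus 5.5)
Your proof is correct. Step 1 coincides with the paper's: \cref{app:prop:norm-symmetric} together with $\norm{\mX^\top\mG}=\norm{\mG}$.

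Step 2 takes a different route. The paper never passes to eigenvalues. It chains $\eta^k\norm{\mS^k}\le\eta^k\norm{\mS}^k\le\xi^k$ by submultiplicativity of the Frobenius norm, then identifies $s_{2k}$ with $(\eta^k\norm{\mS^k})^2$, which is essentially your Cauchy--Schwarz fallback. That norm-based argument is shorter and needs no spectral theorem. As a by-product it gives the bound $\eta^2\norm{\mS^2}\le\xi^2$, which is what the main text cites this proposition for.

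Your spectral argument, in exchange, keeps track of signs. From $\trace(\mS^{2k})=(-1)^k\sum_j\mu_j^{2k}$ one sees that the paper's identity holds only up to sign. Indeed $(\mS^k)^\top=(-1)^k\mS^k$ gives $\norm{\mS^k}^2=(-1)^k\trace(\mS^{2k})$, so for instance $\trace(\mS^2)=-\norm{\mS}^2$. The paper's conclusion survives, because for odd $k$ one simply has $s_{2k}\le 0\le\xi^{2k}$. By bounding $\abs{s_i}$ directly, your version proves the cleaner two-sided statement $\abs{s_i}\le\xi^i$ without that slip.
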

\begin{proof}
	We have the following inequality: 
	\begin{equation}
		\eta^k \norm{\mS^k} \leq \eta^k \norm{\mS}^k \leq \eta^k \norm{\mX^\top\mG}^k = \eta^k \norm{\mG}^k \leq \eta^k L^k = \xi^k \eqp{,}
	\end{equation}
    where $\norm{\mS}\leq\norm{\mX^\top\mG}$ comes from \cref{app:prop:norm-symmetric} and $\norm{\mX^\top\mG}^k = \norm{\mG}^k$ since $\mX \in \Stiefel{p}{n}$.
    
	Then, we can assume $s_\indexone = s_{2k}$ (since $s_\indexone = 0$ for any odd $\indexone$ as $\mS$ is skew-symmetric), and we have that
	\begin{equation}
		s_i = s_{2k} = \eta^{2k} \trace(\mS^{2k}) = \left( \eta^k \norm{\mS^k}\right)^2 \leq \xi^{2k} = \xi^\indexone \eqp{.}
	\end{equation}
\end{proof}

\subsection{Proof of \cref{lemma:quartic}}

\begin{lemma} \label{app:prop:quartic-polynomial}
	Let $\mX \in \sR^{p \times n}$, $\mM = \mX - \eta \mX \mS$ with $\eta > 0$ and $\mS \in \sR^{n \times n}$ skew-symmetric, and $\mX_1 = \mM - \lambda (\mM\mM^\top - \mI)\mM$ with $\lambda > 0$. Then, the function $P(\lambda) = \dist(\mX_1) = \norm{\mX_1\mX_1^\top - \mI}^2$ is a quartic polynomial \wrt $\lambda$.
\end{lemma}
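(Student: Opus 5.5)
The plan is a direct expansion. Set $\mN \coloneqq \nabla\dist(\mM) = (\mM\mM^\top - \mI)\mM$, so that $\mX_1 = \mM - \lambda\mN$. Here $\mM$ and $\mN$ depend only on $\mX$, $\eta$ and $\mS$, not on $\lambda$. Expanding gives
\begin{equation*}
    \mX_1\mX_1^\top - \mI = (\mM\mM^\top - \mI) - \lambda(\mM\mN^\top + \mN\mM^\top) + \lambda^2 \mN\mN^\top = \mC - \lambda\mD + \lambda^2\mE ,
\end{equation*}
using the notation $\mC$, $\mD$, $\mE$ of \cref{lemma:quartic}. Each of $\mC$, $\mD$, $\mE$ is a fixed matrix that does not depend on $\lambda$. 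So $\mX_1\mX_1^\top - \mI$ is a matrix-valued polynomial of degree at most two in $\lambda$.

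Next, take the squared Frobenius norm. By bilinearity and symmetry of $\innerp{\cdot}{\cdot}$,
\begin{equation*}
    \norm{\mC - \lambda\mD + \lambda^2\mE}^2 = \innerp{\mE}{\mE}\lambda^4 - 2\innerp{\mE}{\mD}\lambda^3 + \left[\innerp{\mD}{\mD} + 2\innerp{\mC}{\mE}\right]\lambda^2 - 2\innerp{\mC}{\mD}\lambda + \innerp{\mC}{\mC}.
\end{equation*}
This is a polynomial in $\lambda$ of degree at most four, which proves the claim. It is genuinely quartic whenever $\mE \neq 0$, i.e.\ whenever $\mN \neq 0$. In the degenerate case $\mM \in \Stiefel{p}{n}$ we have $\mN = 0$, and $P$ is constant, hence trivially a polynomial of degree at most four.

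There is no real obstacle in this argument. The only care needed is bookkeeping: the signs, the factors of $2$ coming from the cross terms, and checking that nothing besides $\lambda$ varies.

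Two discrepancies with the stated coefficients of \cref{lemma:quartic} should be flagged. First, the main text's linear coefficient omits the factor $-2$. Second, its $\lambda^2$ coefficient uses $\innerp{\mD}{\mE}$ where the expansion gives $\innerp{\mC}{\mE}$. Both arise from the sign convention $\mX_1 = \mM - \lambda\mN$, equivalently from replacing $\lambda$ by $-\lambda$ and tracking the cross terms. I would state the coefficients exactly as derived above and note that the polynomial structure, which is all the lemma asserts, does not depend on this bookkeeping. Finally, each coefficient costs $\bigO(p^2 n)$ to compute: forming $\mN$, $\mM\mN^\top$ and $\mN\mN^\top$ takes a few products of $p\times n$ and $n\times p$ matrices, and then Frobenius inner products of $p\times p$ matrices.
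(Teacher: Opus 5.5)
Your proof is correct and is the same argument as the paper's: the paper writes $\mX_1 = \mM + \lambda\mB$ with $\mB = -\nabla\dist(\mM)$, collects $\mX_1\mX_1^\top - \mI = \mC + \lambda\mD + \lambda^2\mE$, and expands the squared Frobenius norm. Your coefficients are the right ones, but your explanation of the mismatch is slightly off. The appendix folds the minus sign into $\mB$, so its $\mD$ is minus the main text's $\mD$; that convention only flips the odd-degree coefficients, so relative to the main text's $\mD$ the cubic sign is also wrong, which you did not flag. The missing factor $2$ in the linear term and $\innerp{\mD}{\mE}$ in place of $\innerp{\mC}{\mE}$ in the $\lambda^2$ term are plain expansion slips in the paper, and no substitution $\lambda\mapsto-\lambda$ can explain a change in an even-degree coefficient.
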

\begin{proof}
    Let us write $\mX_1 = \mA + \lambda \mB$ with $\mA = \mM$ and $\mB = -(\mM\mM^\top - \mI)\mM$\eqp{.} Then, by expanding $P(\lambda)$ we get,
    \begin{align}
        P({\lambda}) 
        &= \norm{(\mA + \lambda \mB)(\mA + \lambda \mB)^\top - \mI_p}^2 \\
        & = \norm{\underbrace{(\mA\mA^\top - \mI_p)}_\mC + \underbrace{(\mA\mB^\top + \mB\mA^\top)}_\mD \lambda + \underbrace{\mB\mB^\top}_\mE \lambda^2}^2
    \end{align}
    Now, let $\mC \coloneqq \mA_t\mA_t^\top - \mI$, $\mD \coloneqq \mA\mB^\top + \mB\mA^\top$ and $\mE\coloneq \mB\mB^\top$\eqp{.}
    Then
    \begin{align}
        P({\lambda}) 
        & = \norm{\mC + \mD\lambda + \mE\lambda^2}^2 \\
        & = \innerfro{\mE}{\mE}{\lambda^4} + 2\innerfro{\mE}{\mD}{\lambda^3}  + [\innerfro{\mD}{\mD} + 2\innerfro{\mD}{\mE}]{\lambda^2}  + \innerfro{\mD}{\mC} {\lambda} + \innerfro{\mC}{\mC} \eqp{.}
    \end{align}
\end{proof}

\subsection{Proof of \cref{prop:bound-intermediate-step}}

\begin{proposition} \label{app:prop:bound-intermediate-step}
	Let $\mX \in \Stiefel{p}{n}$ and $\mM = \mX - \eta \mX\mS$ with $\mS$ skew-symmetric, then
    \begin{equation}
        \norm{\mM\mM^\top - \mI} \leq \eta^2 \norm{\mS^2} \eqp{.}
    \end{equation}
\end{proposition}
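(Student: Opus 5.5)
The plan is to expand $\mM\mM^\top$ directly and use $\mX\mX^\top = \mI$ together with the skew-symmetry of $\mS$. Write $\mM = \mX(\mI - \eta\mS)$, so that
\begin{equation*}
    \mM\mM^\top = \mX(\mI - \eta\mS)(\mI - \eta\mS)^\top\mX^\top = \mX(\mI - \eta\mS)(\mI + \eta\mS)\mX^\top ,
\end{equation*}
where we used $\mS^\top = -\mS$. The product $(\mI - \eta\mS)(\mI + \eta\mS)$ equals $\mI - \eta^2\mS^2$, because the two linear terms $\pm\eta\mS$ cancel. This cancellation is exactly why only a second-order term survives.

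Substituting back and using $\mX\mX^\top = \mI_p$ gives
\begin{equation*}
    \mM\mM^\top - \mI = \mX\mX^\top - \eta^2\mX\mS^2\mX^\top - \mI = -\eta^2\mX\mS^2\mX^\top .
\end{equation*}
Taking Frobenius norms, it remains to show $\norm{\mX\mS^2\mX^\top} \leq \norm{\mS^2}$.

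This last step is the only one needing care, since $\mX$ is not square. Because $\mX \in \Stiefel{p}{n}$ has orthonormal rows, its spectral norm is $\norm{\mX}_2 = \norm{\mX^\top}_2 = 1$. Apply the standard inequality $\norm{\mA\mB} \leq \norm{\mA}_2\norm{\mB}$ on the left factor, and its transpose version on the right factor. This gives
\begin{equation*}
    \norm{\mX\mS^2\mX^\top} \leq \norm{\mX}_2\,\norm{\mS^2}\,\norm{\mX^\top}_2 = \norm{\mS^2} .
\end{equation*}
An alternative route is to note that $\mX^\top\mX$ is an orthogonal projector $\mP$. Then
\begin{equation*}
    \norm{\mX\mS^2\mX^\top}^2 = \trace\bigl(\mP\mS^2\mP\mS^2\bigr) = \norm{\mP\mS^2\mP}^2 \leq \norm{\mS^2}^2 ,
\end{equation*}
where the inequality holds because projection is a contraction in Frobenius norm. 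Either way we conclude $\norm{\mM\mM^\top - \mI} \leq \eta^2\norm{\mS^2}$.
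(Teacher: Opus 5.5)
Your proof is correct and matches the paper's. Both reduce $\mM\mM^\top - \mI$ to $-\eta^2\mX\mS^2\mX^\top$ using $\mS^\top = -\mS$ and $\mX\mX^\top = \mI$, and your alternative projector argument for $\norm{\mX\mS^2\mX^\top} \leq \norm{\mS^2}$ is the paper's final step. (Your identity $\trace(\mP\mS^2\mP\mS^2) = \norm{\mP\mS^2\mP}^2$ is the exact one; the paper writes $\norm{\mS^2\mP}^2$, which is only an upper bound, though the conclusion is unaffected.) Your primary finish via $\norm{\mA\mB} \leq \norm{\mA}_2\norm{\mB}$ with $\norm{\mX}_2 = 1$ is an equally valid and slightly more direct way to close that step.
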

\begin{proof}
    First, we simply $\mM\mM^\top - \mI$,
    \begin{align}
        \mM\mM^\top - \mI 
        &= ((\mX - \eta\mX\mS)(\mX - \eta\mX\mS)^\top - \mI) \\
        &= (\cancel{\mX\mX^\top - \mI} - \eta\mX\mS^\top\mX^\top - \eta\mX\mS\mX^\top + \eta^2\mX\mS\mS^\top\mX^\top) \\
        &= (\cancel{\eta\mX\mS\mX^\top - \eta\mX\mS\mX^\top} + \eta^2\mX\mS\mS^\top\mX^\top) \\
        &= -\eta^2\mX\mS^2\mX^\top \eqp{,}
    \end{align}
    so that its squared norm is
    \begin{align}
        \norm{\mM\mM^\top - \mI}^2 
        &= \eta^4\norm{-\mX\mS^2\mX^\top}^2 \\
        &= \eta^4\trace(\mX\mS^2\mX^\top\mX\mS^2\mX^\top) \\
        &= \eta^4\trace(\mS^2\mP\mS^2\mP) \\
        &= \eta^4\norm{\mS^2\mP}^2 \\
        &\leq \eta^4\norm{\mS^2}^2 \eqp{,}
    \end{align}
    where $\mP \coloneqq \mX^\top\mX$ is the orthogonal projector of $\mX$ and thus a bounded operator.
    Therefore,
    \begin{align}
        \norm{\mM\mM^\top - \mI} \leq \eta^2\norm{\mS^2}\eqp{.}
    \end{align}
\end{proof}

\subsection{Proof of \cref{prop:polynomial-bound-stiefel}}

\begin{proposition} \label{app:prop:polynomial-0.5-in-manifold}
	Let $\mX \in \Stiefel{p}{n}$, and let $P(\lambda) = \dist(\mM + \lambda(\mI - \mM\mM^\top)\mM)$ be the landing polynomial of $\mX$, with $\mM = \mX - \eta\mX\mS$, $\eta > 0$ and $\mS = \Skew(\mX^\top \mG)$. Assume also that $\norm{\mG} \leq L$ and let $\xi\coloneqq \eta L$. Then
    \begin{equation}P(1/2) \leq \left(\frac{3}{4} + \frac{1}{4}\xi^2\right)^2 \xi^8 = \littleo(\xi^7) \quad \text{as } \xi \rightarrow 0 \eqp{.}\end{equation} 
\end{proposition}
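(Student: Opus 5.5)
The plan is to compute $\mX_1\mX_1^\top - \mI$ in closed form at $\lambda = 1/2$, by exploiting that everything is a polynomial in the single symmetric matrix $\mC \coloneqq \mM\mM^\top - \mI$. Rewriting the update as $\mX_1 = \mM - \tfrac12 \mC\mM = (\mI - \tfrac12\mC)\mM$, we get
\begin{equation*}
  \mX_1\mX_1^\top = (\mI - \tfrac12\mC)\,\mM\mM^\top\,(\mI - \tfrac12\mC) = (\mI - \tfrac12\mC)(\mI + \mC)(\mI - \tfrac12\mC) \eqp{.}
\end{equation*}
Since all three factors are polynomials in $\mC$, they commute. Expanding $(\mI - \mC + \tfrac14\mC^2)(\mI + \mC)$ makes the constant and linear terms cancel against $\mI$, which should give
\begin{equation*}
  \mX_1\mX_1^\top - \mI = -\tfrac34\mC^2 + \tfrac14\mC^3 = -\mC^2\left(\tfrac34\mI - \tfrac14\mC\right) \eqp{.}
\end{equation*}
This is the key algebraic step: the choice $\lambda = 1/2$ is exactly the one that kills the first-order term in $\mC$, which is the Newton--Schulz / Taylor-of-$z^{-1/2}$ intuition from the main text.

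Next I would bound the Frobenius norm using $\norm{\mA\mB} \leq \norm{\mA}\,\norm{\mB}_2$ and $\norm{\mC^2} \leq \norm{\mC}^2$:
\begin{equation*}
  \norm{\mX_1\mX_1^\top - \mI} \leq \norm{\mC}^2 \left(\tfrac34 + \tfrac14\norm{\mC}_2\right) \leq \norm{\mC}^2\left(\tfrac34 + \tfrac14\norm{\mC}\right) \eqp{.}
\end{equation*}
Here the operator-norm factor is handled by the triangle inequality. One could even sharpen it to $3/4$, because $\mC = \eta^2\mX\mS^\top\mS\mX^\top$ is positive semidefinite, but the stated bound does not need this.

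Then I would invoke \cref{app:prop:bound-intermediate-step}, which gives $\norm{\mC} \leq \eta^2\norm{\mS^2}$. Combining it with the chain in the proof of \cref{app:prop:bound-trace}, namely $\eta^2\norm{\mS^2} \leq \eta^2\norm{\mS}^2 \leq \eta^2\norm{\mG}^2 \leq \xi^2$, yields $\norm{\mC} \leq \xi^2$. Substituting gives $\norm{\mX_1\mX_1^\top - \mI} \leq \xi^4(\tfrac34 + \tfrac14\xi^2)$. Squaring then gives $P(1/2) \leq (\tfrac34 + \tfrac14\xi^2)^2\xi^8$. This is $\littleo(\xi^7)$ as $\xi\to 0$, since $\xi^8/\xi^7 = \xi \to 0$ and the prefactor is bounded.

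The main obstacle is the expansion step: I need to verify carefully that $\mM\mM^\top = \mI + \mC$ commutes with the correction factor, so that the symmetric product collapses to a cubic in $\mC$. Everything after that is routine norm inequalities already provided by earlier results.
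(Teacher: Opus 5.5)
Your proposal is correct and follows essentially the same route as the paper. Both reduce $\mX_1\mX_1^\top - \mI$ at $\lambda = 1/2$ to $-(\frac{3}{4}\mI - \frac{1}{4}\mC)\mC^2$ (the paper writes the prefactor as $\mI - \frac{1}{4}\mM\mM^\top$), bound that prefactor by $\frac{3}{4} + \frac{1}{4}\xi^2$, and use $\norm{\mC} \leq \xi^2$ from \cref{app:prop:bound-intermediate-step}. Your step $\norm{\mA\mB} \leq \norm{\mA}\,\norm{\mB}_2$ is actually cleaner than the paper's, which bounds $\norm{\frac{3}{4}\mI + \frac{1}{4}(\mI - \mM\mM^\top)}$ by $\frac{3}{4} + \dots$ in the Frobenius norm, where $\norm{\mI} = \sqrt{p}$. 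One caveat: your side remark that positive semidefiniteness alone sharpens the factor to $\frac{3}{4}$ also needs $\norm{\mC}_2 \leq 6$, which holds automatically when $\xi < 1$.
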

\begin{proof}
    Using \cref{app:prop:bound-intermediate-step} we have that 
    \begin{align}
        \norm{\mM\mM^\top - \mI}^2 \leq \eta^4\norm{\mS^2}^2 \leq \eta^4 \norm{\mG}^4 \leq (\eta L)^4 = \xi^4 \eqp{.}
    \end{align}

    Then, we can follow similar steps to those of \citet[Lemma 1]{liu2024penalty}:
    \begin{align}
        \norm{\mI - \frac{1}{4}\mM\mM^\top}
        &= \norm{\frac{3}{4}\mI + \frac{1}{4}(\mI - \mM\mM^\top)} \leq \frac{3}{4} + \frac{1}{4}\norm{\mI - \mM\mM^\top} \leq \frac{3}{4} + \frac{1}{4}\xi^2 \eqp{,}
    \end{align}
    which helps us bound $P(\lambda)$ as
    \begin{align}
        \norm{\mX_1\mX_1^\top - \mI}^2
        &= \norm{(\mM + \lambda (\mI - \mM\mM^\top)\mM)(\mM + \lambda (\mI - \mM\mM^\top)\mM)^\top - \mI}^2 \\
        &= \norm{(\mM + \lambda (\mI - \mM\mM^\top)\mM)(\mM^\top + \lambda \mM^\top(\mI - \mM\mM^\top)) - \mI}^2 \\
        &= \norm{(\mM\mM^\top - \mI) + 2\lambda \mM \mM^\top(\mI - \mM\mM^\top) + \lambda^2 \mM \mM^\top(\mI - \mM\mM^\top)^2}^2 \\
        &= \norm{(\mM\mM^\top - \mI) - 2\lambda \mM \mM^\top(\mM\mM^\top - \mI) + \lambda^2 \mM \mM^\top(\mM\mM^\top - \mI)^2}^2 \\
        &= \norm{[\mI - 2\lambda \mM \mM^\top + \lambda^2 \mM \mM^\top(\mM\mM^\top - \mI)](\mM\mM^\top - \mI)}^2 \eqp{,}
    \end{align}
    where, for $\lambda = 1/2$,
    \begin{align}
        \norm{\mX_1\mX_1^\top - \mI}^2 
        &= \norm{[(\mI - \mM \mM^\top) + \frac{1}{4} \mM \mM^\top(\mM\mM^\top - \mI)](\mM\mM^\top - \mI)}^2 \\
        &= \norm{(\mI - \frac{1}{4} \mM \mM^\top)(\mM\mM^\top - \mI)^2}^2 \\
        &\leq \norm{\mI - \frac{1}{4} \mM \mM^\top}^2\norm{(\mM\mM^\top - \mI)^2}^2 \\
        &\leq \left(\frac{3}{4} + \frac{1}{4}\xi^2\right)^2\norm{\mM\mM^\top - \mI}^4 \\
        &\leq \left(\frac{3}{4} + \frac{1}{4}\xi^2\right)^2 \xi^8 = \littleo(\xi^7) \quad \text{as } \xi \rightarrow 0 \eqp{.}
    \end{align}
\end{proof}

\subsection{Proof of \cref{prop:general-polynomial-bound}}

\begin{theorem} \label{app:prop:general-bound}
    Let $\mX \in \Stiefel{p}{n}[\epsilon]$, $\mM = \mX - \eta \mX\mS$ and $\mX_1 = \mM - \lambda (\mM \mM^\top - \mI) \mM$ as defined in \cref{eq:intermediate,eq:intermediate-2}.
    If we assume also that $\norm{\mG} \leq L$ and $\xi \coloneqq \eta L < 1$, then
    \begin{equation}
        \norm{\mX_1 \mX_1^\top - \mI}^2 \leq 2 P_\mY(\lambda) + 2[2 + 2\sqrt{P_\mY(\lambda)} + Q(\lambda, \epsilon)]^2 Q(\lambda, \epsilon)^2
    \end{equation}
    where $P_\mY(\lambda)$ is the landing polynomial of the projection of $\mX$ to the manifold, $\mY = \mU\mV^\top$ where $\mX = \mU\mSigma\mV^\top$ is the singular value decomposition of $\mX$, and where
    \begin{equation}
        Q(\lambda, \epsilon) \coloneqq (24\lambda + 3)\epsilon + \littleo(\epsilon) \quad \text{as } \epsilon \rightarrow 0 \eqp{.}
    \end{equation}
\end{theorem}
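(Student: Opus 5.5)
The plan is to compare the iterate started at $\mX$ with the iterate started at its projection $\mY=\mU\mV^\top$, run with the same skew matrix $\mS$, and to show that the two differ by at most $Q(\lambda,\epsilon)$.

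\textbf{Setup.} By \cref{app:prop:rewrite-svd} I write $\mX=\mU(\mI+\mDelta)\mV^\top$ with $\norm{\mDelta}\le\epsilon$, so that $\mX=\mY+\mU\mDelta\mV^\top$ and $\norm{\mX-\mY}\le\epsilon$. Using the same $\mS$, I define the reference intermediate point $\mM_\mY=\mY-\eta\mY\mS$ and the reference iterate $\mY_1=\mM_\mY-\lambda(\mM_\mY\mM_\mY^\top-\mI)\mM_\mY$. By definition, $\norm{\mY_1\mY_1^\top-\mI}^2=P_\mY(\lambda)$. I then set $\mR:=\mX_1-\mY_1$ and write
\begin{equation*}
\mX_1\mX_1^\top-\mI=(\mY_1\mY_1^\top-\mI)+\big(\mY_1\mR^\top+\mR\mY_1^\top+\mR\mR^\top\big).
\end{equation*}
Applying $\norm{a+b}^2\le 2\norm a^2+2\norm b^2$ gives the first term $2P_\mY(\lambda)$ immediately. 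It remains to bound $\norm{\mY_1\mR^\top+\mR\mY_1^\top+\mR\mR^\top}\le(2\norm{\mY_1}+\norm{\mR})\norm{\mR}$. For this I need $\norm{\mR}\le Q(\lambda,\epsilon)$ and $\norm{\mY_1}\le 1+\sqrt{P_\mY(\lambda)}$. The second bound should follow from $\norm{\mY_1}^2\le\norm{\mY_1}_2\cdot\ldots$; more simply, every singular value $\sigma$ of $\mY_1$ satisfies $|\sigma^2-1|\le\sqrt{P_\mY}$, which gives $\sigma\le 1+\sqrt{P_\mY}$ in operator norm. Working with operator norms on the $\mY_1$ factor, this yields $2\norm{\mY_1}+\norm{\mR}\le 2+2\sqrt{P_\mY}+Q$, exactly the bracket in the statement. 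I will be careful to use the spectral norm for $\mY_1$ and the Frobenius norm for $\mR$, via $\norm{\mA\mB}\le\norm{\mA}_2\norm{\mB}$.

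\textbf{Bounding $\mR$.} This is the main computation. First, $\mM-\mM_\mY=(\mX-\mY)(\mI-\eta\mS)$, and $\norm{\mI-\eta\mS}_2\le\sqrt{1+\eta^2\norm{\mS}_2^2}\le\sqrt2$ since $\mS$ is skew and $\eta\norm{\mS}\le\xi<1$. Here I need $\norm{\mS}\lesssim\norm{\mX}_2L$, which \cref{app:prop:eigenvalue-epsilon} gives up to a factor of $1+O(\epsilon)$. Hence $\norm{\mM-\mM_\mY}\le c\,\epsilon$ with $c$ an absolute constant (at most $3$ for small $\epsilon$), which gives the additive $3\epsilon$. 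Second, the map $\phi(\mZ)=(\mZ\mZ^\top-\mI)\mZ$ is a cubic polynomial. Expanding $\phi(\mM_\mY+\mH)-\phi(\mM_\mY)$ produces terms linear in $\mH$ with coefficients $\mM_\mY\mM_\mY^\top$, $\mM_\mY\mM_\mY^\top-\mI$, and so on, together with higher-order terms in $\norm{\mH}=O(\epsilon)$, which are $\littleo(\epsilon)$. Because $\norm{\mM_\mY}_2\le\sqrt{1+\xi^2}\le\sqrt2$ (from $\mM_\mY\mM_\mY^\top=\mI-\eta^2\mY\mS^2\mY^\top$, by the computation in \cref{app:prop:bound-intermediate-step}), each linear term is bounded by a small constant times $\norm{\mH}$. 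Collecting these, I expect something like $\norm{\phi(\mM)-\phi(\mM_\mY)}\le 8\norm{\mH}+\littleo(\epsilon)\le 24\epsilon+\littleo(\epsilon)$. Then $\norm{\mR}\le\norm{\mH}+\lambda\norm{\phi(\mM)-\phi(\mM_\mY)}\le(24\lambda+3)\epsilon+\littleo(\epsilon)=Q(\lambda,\epsilon)$.

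\textbf{Main obstacle.} I expect the hardest part to be keeping the constants crude but honest in the Lipschitz bound for $\phi$: mixing spectral and Frobenius norms, using $\xi<1$ to bound $\norm{\mM_\mY}_2^2\le 2$, and using \cref{app:prop:eigenvalue-epsilon} to control $\norm{\mX}_2$. Since the statement absorbs higher-order terms into $\littleo(\epsilon)$, I would not optimise the constants. I would only verify that the linear coefficients sum to at most $8$ times the factor $3$, and put everything else in the remainder.
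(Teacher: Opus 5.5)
Your proof is correct. It has the same skeleton as the paper's: perturb around the projection $\mY$, split $\mX_1\mX_1^\top-\mI$ using $\norm{a+b}^2\le2\norm{a}^2+2\norm{b}^2$, and show the two iterates differ by at most $Q(\lambda,\epsilon)$. It departs from the paper in one substantive choice.

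The paper's reference trajectory recomputes the relative gradient at $\mY$, i.e.\ it uses $\mS_\mY=\Skew(\mY^\top\mG)$. Its $P_\mY$ is therefore the genuine update from $\mY$, at the cost of an extra remainder $\mT_\mR=\mX\mS-\mY\mS_\mY$ that must be carried through every later bound. You instead freeze $\mS=\Skew(\mX^\top\mG)$. This gives the exact factorization $\mM-\mM_\mY=(\mX-\mY)(\mI-\eta\mS)$ and removes $\mT_\mR$ altogether. The price is that your $P_\mY$ is the landing polynomial of $\mY$ for a skew matrix that is not $\mY$'s own relative gradient. That is harmless for \cref{thm:main-result}. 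The proof of \cref{app:prop:polynomial-0.5-in-manifold} only uses that $\mS$ is skew with $\eta\norm{\mS}\le\xi$, and here $\eta\norm{\mS}\le\xi\norm{\mX}_2\le\xi\sqrt{1+\epsilon}$. If you want the paper's reading verbatim, add the term $-\eta\mY\Skew((\mX-\mY)^\top\mG)$, of norm at most $\xi\epsilon$, to $\mM-\mM_\mY$. The total $(\sqrt2+\xi)\epsilon+O(\epsilon^2)$ still fits within your $3\epsilon$ budget. Since $\norm{\mY-\eta\mY\mS_\mY}_2^2\le1+\xi^2<2$, the rest of your argument goes through unchanged.

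Your Lipschitz treatment of the cubic map $\phi$ also gains something concrete. The linear part of $\phi(\mM)-\phi(\mM_\mY)$ has coefficient $\norm{\mM_\mY\mM_\mY^\top-\mI}_2+2\norm{\mM_\mY}_2^2<5+O(\epsilon)$. The normal-step discrepancy is therefore about $5\sqrt2\,\epsilon$, well below the stated $24\epsilon$. By contrast, the paper's term-by-term bound on $\mT_{\dist}$, as written, totals $27\epsilon$ at leading order: the $(\mM\mM^\top-\mI)\mT_1$ term contributes $3\epsilon$ on top of $24\epsilon$. So your route is the one that actually delivers the stated constant.

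Your explicit operator/Frobenius split, including $\norm{\mY_1}_2\le1+\sqrt{P_\mY(\lambda)}$ via $\lvert\sigma^2-1\rvert\le\sqrt{P_\mY(\lambda)}$, makes rigorous what the paper does implicitly when it writes $\norm{\mY}=1$. That identity holds only in operator norm.

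One robustness note. You only need $\norm{\mX}_2^2\le1+\epsilon$ and $\norm{\mX-\mY}\le\epsilon$. Both follow directly from $\lvert\sigma_i^2-1\rvert\le\norm{\mX\mX^\top-\mI}\le\epsilon$ and $\lvert\sigma_i-1\rvert\le\lvert\sigma_i^2-1\rvert$. Argue this way rather than relying on the $\epsilon/\sqrt p$ form of \cref{app:prop:eigenvalue-epsilon}, which fails in general because a single singular value can carry all of $\epsilon$.
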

\begin{proof}
    Since $\mX \in \Stiefel{p}{n}[\epsilon]$, we can use \cref{app:prop:rewrite-svd} and write $\mX = \mU(\mI + \mDelta)\mV^\top$ where $\norm{\mDelta} \leq \epsilon$.
    Then, if $\mY \coloneqq \mU\mV^\top$ is the projection of $\mX$ onto the Stiefel manifold, $\mX = \mU(\mI + \mDelta)\mV^\top = \mY + \mU\mDelta\mV^\top = \mY + \mB$ where we have that $\norm{\mY} = 1$ and $\norm{\mB} \leq \epsilon$\eqp{.}

    Let us write the Riemannian gradient of $\mX$ as a function of $\mY$ and $\mB$:
    \begin{align}
        \mX \mS 
        &=  \mX \Skew(\mX^\top\mG) \\
        &= (\mY + \mB)\Skew((\mY + \mB)^\top\mG) \\
        &= (\mY + \mB)(\Skew(\mY^\top\mG) + \Skew(\mB^\top\mG)) \\
        &= (\mY + \mB)\Skew(\mY^\top\mG) + (\mY + \mB)\Skew(\mB^\top\mG) \\
        &= \mY\Skew(\mY^\top\mG) + \mB\Skew(\mY^\top\mG) + \mY\Skew(\mB^\top\mG ) + \mB\Skew(\mB^\top\mG) \\
        &= \mY\mS_\mY + \mT_\mR \eqp{,}
    \end{align}
    and thus we can write the Riemannian gradient of $\mX$ as the sum of the Riemannian gradient of $\mY$ plus a remainder term.

    We can also bound each of these terms: 
    \begin{align}
    &\norm{\mB\Skew(\mY^\top\mG)} \leq \norm{\mB}\norm{\Skew(\mY^\top\mG )} \leq \norm{\mB}\norm{\mY^\top \mG} = \norm{\mB}\norm{\mG} \leq L \epsilon  \\
    &\norm{\mY\Skew(\mB^\top\mG)} \leq \norm{\mY}\norm{\mB^\top\mG} = \norm{\mB^\top\mG} \leq  \norm{\mB}\norm{\mG} \leq L \epsilon \\
    &\norm{\mB\Skew(\mB^\top\mG )} \leq \norm{\mB} \norm{\mB^\top\mG} \leq \norm{\mB}^2 \norm{\mG} \leq L \epsilon^2 \\
    & \norm{\mT_\mR} \leq 2 L \epsilon + L \epsilon^2 = (2 + \epsilon) L \epsilon
    \end{align}

    Now we can do something similar with $\mM$;
    \begin{align}
        \mM 
        &= \mX - \eta \mX\mS \\
        &= \mY + \mB - \eta  \mY \mS_\mY - \eta \mT_\mR \\
        &= \mY_1 + \mB - \eta \mT_\mR \\
        &= \mY_1 + \mT_1 \eqp{,}
    \end{align}
    and bound each term:
    \begin{align}
        \norm{\mY_1} 
        &= \norm{\mY - \eta \mY\mS_\mY} \leq \norm{\mY} + \eta \norm{\mY\mS_\mY} = 1 + \eta \norm{\mY\mS_\mY} \\
        & = 1 + \eta\norm{\mS_\mY} \leq 1 + \eta\norm{\mG} \leq 1 + \eta L = 1 + \xi < 2
    \end{align}
    \begin{align}
        \norm{\mT_1} 
        &= \norm{\mB - \eta \mT_\mR} \leq \norm{\mB} + \norm{-\eta \mT_\mR} = \norm{\mB} + \eta \norm{\mT_\mR} \\
        &\leq  \epsilon + \eta(2 + \epsilon) L \epsilon = \epsilon + (2 + \epsilon) \epsilon \xi < (3 + \epsilon) \epsilon
    \end{align}

    We can once again repeat the same process for the distance gradient:
    \begin{align}
        \nabla \dist(\mM) 
        &= (\mM \mM^\top - \mI) \mM \\
        &= ((\mY_1 + \mT_1)(\mY_1^\top + \mT_1^\top) - \mI)\mM \\
        &= (\mY_1(\mY_1^\top + \mT_1^\top) + \mT_1(\mY_1^\top + \mT_1^\top) - \mI)\mM \\
        &= (\mY_1\mY_1^\top + \mY_1\mT_1^\top + \mT_1\mY_1^\top + \mT_1\mT_1^\top - \mI)\mM \\
        &= (\mY_1\mY_1^\top + \mY_1\mT_1^\top + \mT_1\mY_1^\top + \mT_1\mT_1^\top - \mI)(\mY_1 + \mT_1) \\
        &= (\mY_1\mY_1^\top + \mY_1\mT_1^\top + \mT_1\mY_1^\top + \mT_1\mT_1^\top - \mI)\mY_1 + (\mM \mM^\top - \mI)\mT_1 \\
        &= \nabla \dist(\mY_1) + (\mY_1\mT_1^\top + \mT_1\mY_1^\top + \mT_1 \mT_1^\top)\mY_1 + (\mM \mM^\top - \mI)\mT_1 \\
        &= \nabla \dist(\mY_1) + \mT_{\dist}
    \end{align}

    and bound each term:
    \begin{align}
        \norm{(\mY_1 \mT_1^\top + \mT_1 \mY_1^\top)\mY_1} 
        &\leq \norm{2\Sym(\mY_1 \mT_1^\top)\mY_1} \leq 2\norm{\mY_1 \mT_1^\top} \norm{\mY_1} \leq 2\norm{\mY_1}\norm{\mT_1^\top}\norm{\mY_1} \\
        &= 2(1 + \xi)^2 (\epsilon + (2 + \epsilon) \epsilon \xi) < 8 (3 + \epsilon)\epsilon%
    \end{align}
    \begin{align}
        \norm{\mT_1 \mT_1^\top \mY_1} \leq \norm{\mT_1}\norm{\mT_1^\top}\norm{\mY_1} 
        = (1 + \xi) (\epsilon + (2 + \epsilon) \epsilon \xi)^2 < 2(3 + \epsilon)^2\epsilon^2
    \end{align}
    \begin{align}
    \norm{\mM \mM^\top - \mI}
        & = \norm{(\mX - \eta \mX\mS) (\mX - \eta \mX\mS)^\top - \mI)} \\
        & \leq \norm{(\mX \mX^\top - \mI) - \eta \cancel{(\mX \mS^\top\mX^\top + \mX\mS\mX^\top)} + \eta^2 \mX \mS \mS^\top\mX^\top} \\
        & \leq \norm{(\mX \mX^\top - \mI)} + \eta^2\norm{\mX \mS \mS^\top\mX^\top} \\
        & \leq \epsilon + \eta^2\norm{\mX\mS}^2 \\
        & \leq \epsilon + \eta^2\norm{\mX}^2\norm{\mX^\top\mG}^2 \\
        & \leq \epsilon + \xi^2 (1+\epsilon)^2 \\
        & < \epsilon + (1 + \epsilon)^2
    \end{align}
	\begin{align}
		\norm{\mT_{\dist}} 
		& < 8 (3 + \epsilon)\epsilon + 2(3 + \epsilon)^2\epsilon^2 + [\epsilon + (1 + \epsilon)^2](3 + \epsilon) \epsilon \\
		& = 24 \epsilon + \littleo(\epsilon) \quad \text{as } \epsilon \rightarrow 0
	\end{align}

    Finally, we do the same bounding to our quantity of interest:   
    \begin{align}
        \mX_1 
        &= \mM - \lambda (\mM \mM^\top - \mI) \mM \\
        &= \mY_1 + \mT_1 - \lambda \nabla \dist(\mY_1) - \lambda \mT_{\dist} \\
        &= \mY_2 + \mT_1 - \lambda \mT_{\dist} \\
        &= \mY_2 + \mT_2        
    \end{align}
    \begin{align}
        \norm{\mT_2} 
        &\leq \norm{\mT_1} + \lambda\norm{\mT_{\dist}} \\
        &< (3 + \epsilon)\epsilon + \lambda(24 \epsilon + \littleo(\epsilon)) \\
        &= (24\lambda + 3)\epsilon + \littleo(\epsilon) \eqqcolon Q(\lambda, \epsilon)
    \end{align}
    \begin{align}
        \norm{\mX_1 \mX_1^\top - \mI}^2
        &= \norm{(\mY_2 + \mT_2) (\mY_2 + \mT_2)^\top - \mI}^2 \\
        &= \norm{\mY_2 (\mY_2^\top + \mT_2^\top) + \mT_2 (\mY_2^\top + \mT_2^\top) - \mI}^2 \\
        &= \norm{\mY_2 \mY_2^\top + \mY_2 \mT_2^\top + \mT_2 (\mY_2^\top + \mT_2^\top) - \mI}^2 \\
        &= \norm{\mY_2 \mY_2^\top - \mI + \mY_2 \mT_2^\top + \mT_2 (\mY_2^\top + \mT_2^\top)}^2 \\
        &\leq 2 \norm{\mY_2 \mY_2^\top - \mI}^2 + 2\norm{\mY_2 \mT_2^\top + \mT_2 (\mY_2^\top + \mT_2^\top)}^2
    \end{align}
    \begin{align}
        \norm{\mY_2 \mT_2^\top + \mT_2 (\mY_2^\top + \mT_2^\top)}^2
        &= \norm{2 \Sym(\mY_2 \mT_2^\top) + \mT_2\mT_2^\top}^2 \\
        & \leq (\norm{2 \Sym(\mY_2 \mT_2^\top)} + \norm{\mT_2 \mT_2^\top})^2 \\
        & \leq (2\norm{\mY_2}\norm{\mT_2} + \norm{\mT_2}^2)^2 \\
        & = (2\norm{\mY_2} + \norm{\mT_2})^2 \norm{\mT_2}^2 \\
        &\leq [2 (1 + \sqrt{P_\mY(\lambda)}) + Q(\lambda, \epsilon)]^2 Q(\lambda, \epsilon)^2
    \end{align}

    And therefore,
    \begin{align}
        \norm{\mX_1 \mX_1^\top - \mI}^2
        &\leq 2 \norm{\mY_2 \mY_2^\top - \mI}^2 + 2\norm{\mY_2 \mT_2^\top + \mT_2 (\mY_2^\top + \mT_2^\top)}^2 \\
        & \leq 2 P_\mY(\lambda) + 2[2 + 2\sqrt{P_\mY(\lambda)} + Q(\lambda, \epsilon)]^2 Q(\lambda, \epsilon)^2
    \end{align}
\end{proof}

\subsection{Proof of \cref{thm:main-result}}

\begin{theorem}
    If \ours is initialized with $\mX_0 \in \Stiefel{p}{n}$, $\lambda = 1/2$ and $\xi \coloneqq \eta L < 1$, then we have that
    \begin{equation}
        \norm{\mX_t \mX_t^\top - \mI}^2 = \littleo(\xi^7) \quad \text{for every iteration } t \in \sN\eqp{.}
    \end{equation}
\end{theorem}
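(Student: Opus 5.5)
The argument is an induction on $t$. Its invariant is that $\mX_t \in \Stiefel{p}{n}[\epsilon_t]$, meaning $\norm{\mX_t\mX_t^\top - \mI} \leq \epsilon_t$, with $\epsilon_t^2 = \littleo(\xi^7)$. The two previously established bounds supply the ingredients: \cref{app:prop:polynomial-0.5-in-manifold} handles the on-manifold case, and \cref{app:prop:general-bound} handles the perturbation.

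\textbf{Base case.} Since $\mX_0 \in \Stiefel{p}{n}$, we have $\epsilon_0 = 0$. Applying \cref{app:prop:polynomial-0.5-in-manifold} gives
\[
\norm{\mX_1\mX_1^\top - \mI}^2 \leq \left(\tfrac34 + \tfrac14\xi^2\right)^2\xi^8 .
\]
So $\epsilon_1 \leq \xi^4$, and the claim holds at $t=1$.

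\textbf{Inductive step.} Suppose $\mX_t$ is $\epsilon_t$-close to the manifold with $\epsilon_t = \littleo(\xi^{7/2})$. Let $\mY_t$ be its polar projection. Apply \cref{app:prop:general-bound} with $\lambda = 1/2$. Then $Q(1/2, \epsilon_t) = 15\epsilon_t + \littleo(\epsilon_t)$.

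The bound in \cref{app:prop:general-bound} involves $P_{\mY_t}(1/2)$. Since $\mY_t \in \Stiefel{p}{n}$, I bound it by \cref{app:prop:polynomial-0.5-in-manifold}. This needs the gradient hypothesis at $\mY_t$. \Cref{ass:bound-G} bounds $\mG$ itself, and $\norm{\mS_{\mY_t}} \leq \norm{\mG} \leq L$ still holds, so that hypothesis is satisfied. Hence $P_{\mY_t}(1/2) \leq \xi^8$. Substituting,
\[
\norm{\mX_{t+1}\mX_{t+1}^\top - \mI}^2 \leq 2\xi^8 + 2\left[2 + 2\xi^4 + Q\right]^2 Q^2 = \bigO(\xi^8) + \bigO(\epsilon_t^2).
\]
Since $\epsilon_t^2 = \littleo(\xi^7)$, the right-hand side is $\littleo(\xi^7)$, which closes the induction.

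\textbf{Main obstacle.} The constants accumulate. The recursion has the form $\epsilon_{t+1}^2 \leq 2\xi^8 + c\,\epsilon_t^2$ with $c \approx 2\cdot 4\cdot 15^2 \gg 1$. Iterated naively, this lets the implicit constants in $\littleo(\cdot)$ grow with $t$, so the bound is not uniform in $t$. My first attempt to repair this is to use the finer structure of the $\lambda=1/2$ step rather than the crude $2a^2+2b^2$ splitting. By the computation in the proof of \cref{app:prop:polynomial-0.5-in-manifold},
\[
\mX_{t+1}\mX_{t+1}^\top - \mI = -\left(\mI - \tfrac14\mM\mM^\top\right)(\mM\mM^\top - \mI)^2 ,
\]
which holds for any $\mM$, not only on the manifold. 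The distance is therefore quadratic in $\norm{\mM\mM^\top - \mI}$. From the bound in the proof of \cref{app:prop:general-bound}, $\norm{\mM\mM^\top-\mI} \leq \epsilon_t + \xi^2(1+\epsilon_t)^2$. This yields $\epsilon_{t+1} \lesssim (\epsilon_t + 2\xi^2)^2$, a contraction toward a fixed point of order $\xi^4$ whenever $\xi$ is small. That keeps $\epsilon_t = \bigO(\xi^4)$ uniformly in $t$, and hence $\epsilon_t^2 = \littleo(\xi^7)$ for every iteration. I would state the result in this uniform form, under $\xi$ small enough, with the given theorem's $\xi<1$ read asymptotically.
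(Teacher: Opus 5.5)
Your proposal is correct, and its decisive part takes a different route from the paper.

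\textbf{The first half is the paper's proof.} The base case from \cref{app:prop:polynomial-0.5-in-manifold}, followed by \cref{app:prop:general-bound} with $Q(1/2,\epsilon_t)=15\epsilon_t+o(\epsilon_t)$ and the on-manifold bound at the projection $Y_t$, is exactly what the paper does. Your diagnosis of it is accurate: to leading order it only gives $\epsilon_{t+1}^2\le 2\xi^8+1800\,\epsilon_t^2$. The hidden constant therefore grows geometrically in $t$. The paper's ``by induction'' gives $o(\xi^7)$ for each fixed $t$ but not uniformly, a point the paper passes over.

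\textbf{Your repair avoids $Y_t$ and the perturbation theorem entirely.} It rests on two exact facts.
\begin{itemize}
\item The identity $X_{t+1}X_{t+1}^\top-I=-(I-\tfrac{1}{4}M_tM_t^\top)(M_tM_t^\top-I)^2$ holds off the manifold. It is a scalar identity in the eigenvalues of $M_tM_t^\top$.
\item Skew-symmetry of $S$ gives $M_tM_t^\top-I=(X_tX_t^\top-I)-\eta^2X_tS^2X_t^\top$, hence $e_t\le\epsilon_t+\xi^2(1+\epsilon_t)^2$.
\end{itemize}
Together these give the closed, monotone recursion $\epsilon_{t+1}\le(\tfrac{3}{4}+\tfrac{1}{4}e_t)e_t^2$.

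\textbf{It closes with an explicit uniform constant.} Start from $\epsilon_0=0$ and assume $\xi^2\le 1/8$. Then the iterates stay below the smaller fixed point of $\epsilon\mapsto(\epsilon+2\xi^2)^2$, which exists in this range and is at most $16\xi^4$. Along the way your simplifications ($e_t\le 1$ and $(1+\epsilon_t)^2\le 2$) remain valid. So $\|X_tX_t^\top-I\|^2\le 256\,\xi^8$ for all $t$, with a constant independent of $t$.

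\textbf{Comparison.} This is strictly stronger than the stated $o(\xi^7)$ and uses only elementary spectral bounds. The base case is just the $t=0$ instance, so the first half of your write-up can be dropped.

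The price is that you need $\xi$ below an explicit constant rather than merely $\xi<1$. Since the paper's conclusion is asymptotic as $\xi\to0$ anyway, this costs nothing real.

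The paper's route is organized around a perturbation bound valid for general $\lambda$. That bound is reusable, but it is too lossy to iterate.
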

\begin{proof}
    Say that we start at $\mX_0 \in \Stiefel{p}{n}$ and that $\xi < 1$. Then, $\norm{\mX_1 \mX_1^\top - \mI}^2 = 4P(1/2) = \littleo(\xi^7)$ via \cref{app:prop:polynomial-0.5-in-manifold}.

    So $\mX_1 \in \Stiefel{p}{n}[\epsilon]$ with $\epsilon = \littleo(\xi^{7/2})$\eqp{.} Then, we have for the next iteration $\mX_2$ that
    \begin{equation}
        Q(1/2, \littleo(\xi^3)) = (12 + 3)\littleo(\xi^{7/2}) + \littleo(\littleo(\xi^{7/2})) = \littleo(\xi^{7/2})
    \end{equation}
    and thus
    \begin{align}
        \norm{\mX_2 \mX_2^\top - \mI}^2
        & \leq 2 \littleo(\xi^7) + 2[2 + 2\sqrt{\littleo(\xi^7)} + \littleo(\xi^{7/2})]^2 \littleo(\xi^{7/2})^2 \\
        & = \littleo(\xi^7) + 2[2 + \littleo(\xi^{7/2}) + \littleo(\xi^{7/2})]^2 \littleo(\xi^{7}) \\
        & = \littleo(\xi^7) \eqp{.}
    \end{align}

    Since $\epsilon$ remains $\littleo(\xi^{7/2})$ and the rest of parameters are kept fixed as well ($\lambda = 1/2$, $\eta$ and $L$), then we prove by induction that
    \begin{equation}
        \norm{\mX_t \mX_t^\top - \mI}^2 = \littleo(\xi^7) \eqp{,}
    \end{equation}
    where $t$ is the iteration number, 
    \ie, all the iterations of \ours remain $\littleo(\xi^{7/2})$-close to the manifold as long as $\xi = \eta L < 1$\eqp{.}

\end{proof}

\section{VectorAdam as a Linear Optimizer}
\label{app:sec:vadam}

In this section, we describe why VectorAdam (VAdam) \citep{ling2022vectoradam} meets the definition of a linear optimizer (\cref{def:linear-optimizer}).
First, VAdam computes the vector update as a ratio $\nicefrac{m_{t,i}}{v_{t,i}}$, where the numerator is an exponential moving average (EMA) of the gradient, \ie:
\begin{equation}
	m_t = \text{EMA}_t(\nabla f(\mX_t); \beta_1) = \beta_1 m_{t-1} + (1 - \beta_1)\nabla f(\mX_t).
\end{equation}

And then it it easy to show that the numerator is equivariant \wrt the relative gradient (recall, $\Skew(\mA) = \frac{1}{2}(\mA - \mA^\top)$):
\begin{align}
	\mS_\text{VAdam} &= \mX_t\Skew(\mX_t^\top\text{EMA}_t(\nabla f(\mX_t); \beta_1)) \\
	&= \mX_t \Skew(\mX_t^\top m_t) = \mX_t\frac{1}{2}( \mX_t^\top m_t -  m_t^\top \mX_t) \\
	&= \beta_1 \mX_t \Skew(\mX_t m_{t-1}x) + (1 - \beta_1)\mX_t\Skew(\mX_t^\top\nabla f(\mX_t)) = \\
	&= \text{EMA}_t(\mX_t\Skew(\mX_t^\top \nabla f(\mX_t)); \beta_1) = \text{EMA}_t(\mS_{\nabla f(\mX_t)})
\end{align}

That is, since the relative gradient operator is linear, it is equivalent to compute the numerator of VAdam (in the Euclidean space) and then its relative gradient, than compute the numerator of VAdam for the relative gradient (\ie in the Stiefel manifold).
Finally, the denominator of VAdam is a scalar shared for all parameters, \ie, $v_{t,i} = v_t = ||\nabla f(\mX_t)||^2$. Hence, VAdam is a linear optimizer as in \cref{def:linear-optimizer}.

\section{Relation of \ours with SLPG}
\label{app:sec:relation-slpg}

In this section, we briefly present the original formulation of SLPG \citep{liu2024penalty} as well as under which conditions and the derivations needed to recover something similar to the updates of \ours.

First, note that SLPG was initially introduced by \citet{liu2024penalty} in the context of optimization problems with orthogonality constraints and a non-smooth regularization term. 
Also note that the following formulas use the original formulation by \citet{liu2024penalty} which consider \textit{column-orthogonal} matrices, such that $\mX^\top\mX = \mI_p$.
Within this context, \citet{liu2024penalty} attempt to solve the following problem:
\begin{equation}
    \min_{\mX\in\sR^{n \times p}} f(\mX) + r(\mX) \quad \text{\st}\quad \mX^\top \mX = \mI_p \eqp{.}
\end{equation}

And make the following assumptions (we disregard those from $r$ as they are of no use in our context):
\begin{enumerate}
    \item $f$ is differentiable and locally $L$-smooth.
    \item For any $\mX,\mG \in \sR^{n \times p}$ the problem \begin{equation}
        \min_{\mD \in \sR^{n\times p} } \innerp{\mG}{\mD} + r(\mD) + \frac{1}{2\eta}\norm{\mD - \mX}^2
    \end{equation}
    has a closed-form solution that can be efficiently solved by certain iterative approach.
\end{enumerate}

Then, \citet{liu2024penalty} propose an iterative process that consist in the following steps:
\begin{enumerate}
    \item While the following terminating condition is not met: 
    \begin{equation}
        \norm{\Sym((\mY_k - \mX_k)^\top \mX_k)} \leq c\norm{\mX^\top \mX - \mI} \eqp{,}
    \end{equation}
    \item Approximately solve the following proximal optimization problem:
    \begin{equation}
        \min_{\mD\in\sR^{n\times p}} \innerp{\nabla f(\mX)}{\mD} + r(\mD) + \frac{1}{2\eta}\norm{\mD - \mX_k}^2
    \end{equation}
    with the following iterative process until convergence:
    \begin{enumerate}
        \item Calculate the proximal mapping \begin{equation}\mD_j = \operatorname{prox}_\eta (\nabla f(\mX_k) - \mX_k \Lambda_j; \mX_k) = \argmin_{\mD \in \sR^{n\times p} } \innerp{\mG}{\mD} + r(\mD) + \frac{1}{2\eta}\norm{\mD - \mX}^2\end{equation}
        \item Update $\Lambda_{j+1} = \Lambda_j - \frac{1}{\eta} \Sym((\mD_j - \mX_k)^\top \mX_k)$
    \end{enumerate}
    \item Set $\mY_k = \mD_J$.
    \item Compute an approximate normal step, for which they use a first-order Taylor approximation of a polar retraction, $\mX_{k+1} = \mY_k(\frac{3}{2} \mI_p - \frac{1}{2}\mY_k^\top\mY_k)$.
\end{enumerate}

\paragraph{Relationship with POGO.} As discussed in the main text, the last step in the algorithm above corresponds to the same normal update as \ours for the case when we set $\lambda = 1/2$, despite coming from completely different angles.
Moreover, if we assume that we have a smooth problem $r = 0$, the authors point out that then the Lagrangian $\Lambda$ has a explicit form, $\Lambda(\mX) = \Sym(\mX^\top \nabla f(\mX))$. Now, if we look at the proximal problem for the case when $r = 0$, it turns out that it looks like the following:
\begin{equation}
    \min_{\mD \in \sR^{n\times p} } \innerp{\mG}{\mD} + \frac{1}{2\eta}\norm{\mD - \mX}^2    
\end{equation}
Then, the derivative of the optimization function \wrt $\mD$ is of the form $\mG + \frac{1}{\eta} (\mD - \mX)$ which we can see easily that is zero when $\mD = \mX - \eta \mG$, and that is therefore the minimum for the proximal problem for this case.
Finally, if we plug $\Lambda$ inside the solution of the proximal problem we get that $\mY_k = \mX_k - \eta (\nabla f(\mX) - \mX_k \Sym(\mX^\top \nabla f(\mX)))$.

Therefore, if we set $\lambda = 1/2$ and \textit{not use any base optimizer}, we recover the normal step from \ours as well as a similar intermediate step, the difference being that we use $\mX\Skew(\mX^\top \nabla f(\mX))$, the Riemannian gradient under the canonical metric, and SLPG uses $\nabla f(\mX) - \mX_k \Sym(\mX^\top \nabla f(\mX))$, the Riemannian gradient under the Euclidean metric.
Both are quite similar, and indeed coincide when $p=1$ or $p=n$. The main difference otherwise is that the one that \ours uses is orthogonal to the normal direction, while the one used by SLPG contains an extra component which can drift the gradient update outside the tangent space.

Despite the similarities in methodology, it is worth noting that both papers start from quite different premises and make a different number of assumptions and theoretical contributions. For example, in this work we do not consider non-smooth regularization, but we provide a tighter bound for the distance of the updates, milder assumptions on the learning rate, as well as the introduction of unconstrained optimizers into the orthoptimizer. 
To the best of our knowledge, we are the first to implement SLPG besides the original authors, and the first to derive the simplified version of SLPG for which we can recover \ours for certain cases.

\section{Experimental Details and Results}
\label{app:sec:experiments}

In this section we provide additional experimental details in order to reproduce the experiments from the main section. All the experiments in the paper were produced under the same condition using a small cluster with 8 NVIDIA RTX A6000 GPUs.
We will make sure to release our code to easily reproduce our experiments upon acceptance.

For every plot showing time we use linear interpolation to sample at the same time steps for every independent run, taking then the average and, when it did not clutter too much the image as in \cref{fig:pca-procrustes}, plot a confidence interval of 90 for performances and 70 for distances (due to the log-scale producing artifacts). Unless otherwise specified, we use the default hyperparameter for all methods. Except for the PCA and Procrustes experiments (which we explain below), hyperparameters were search with a grid search of equal budget for all baselines, picking the set of hyperparameters that obtained best validation performance.

\subsection{Online PCA and Orthogonal Procrustes}

For these two experiments we adopted the codebase of RSDM which is publicly available at \url{https://github.com/andyjm3/RSDM}. Then, we simply had to introduce each of the baselines as drop-in replacements for the previously used optimizers. We additionally modified the code to log distances at each step, as well as to fix the randomness for each experiment.
Given that running each of these experiments is a matter of a few seconds, we manually tuned the hyperparameters of each individual orthoptimizer until we could not get any better results. 

In particular, we used for PCA $p=1500$, $n=2000$, set the dimension of RSDM to $700$ and then set the learning rate of each method as follows: $0.15$ for RGD, $1.5$ for RSDM, $0.25$ for Landing and \ours, $10.5$ for LandingPC, and $0.125$ for SLPG. We used SGD as base optimizer for \ours with momentum of $0.3$, set the momentum for Landing to $0.1$ and the $\lambda$ parameter of LandingPC to $0.01$ (for Landing we kept the default value of $1$). Then, we trained with each method 10 independent times and aggregated the results to take the average, stopping if we reached an optimality gap of \num{1e-6}.

Similarly, for Procrustes we set $n = p = 2000$ with a dimension for RSDM of $900$. Then we employed the following learning rates: $0.5$ for RGD, $2$ for RSDM, $0.5$ for Landing and \ours, $1.5$ for LandingPC, and $0.5$ for SLPG. For LandingPC we set $\lambda$ to $0.1$ this time, and reduced the momentum for the SGD base optimizer of \ours to $0.1$, sharing the same momentum as for Landing.

\subsection{Vision Transfomer}

Similar to PCA and Procrustes, we adapted the ViT experiment from the repository of RSDM located at \url{https://github.com/andyjm3/RSDM}.
In this case, we ran every experiment five independent times and trained for $10$ epochs. We kept the dimension of $300$ for the submanifold of RSDM as in the original code, and use the following learning rates after performing a grid search: $0.1$ for RGD, $0.5$ for RSDM, $0.001$ for Landing, $0.01$ for LandingPC and SLPG, as well as for \ours which also used SGD as base optimizer. Additionally, we set the momentum of Landing to $0.1$.

\subsection{Convolutional Neural Network}

For the experiments with the CNN, we adapted the code for the CIFAR-10 speedrun benchmarks, which is publicly available here: \url{https://github.com/KellerJordan/cifar10-airbench/}.
In terms of changes, we had to make sure that the initial parameters were orthogonal, for which we simply projected to the Stiefel manifold at initialization, as well as modified the code to impose the orthogonal constraints to the filters and kernels, depending of the experiment. We increased the default number of epochs from $20$ to $100$ and repeated all the experiments $5$ times.

After a grid search with equal budget, these are the hyperparameters that we used: 
\begin{enumerate}
    \item Orthogonal filters: learning rates of $0.01$ for RGD and Adam, $0.1$ for RSDM (with a submanifold dimension of $64$), $0.001$ for SLPG and Landing (the latter with a momentum of $0.6$), and we used a learning rate of $0.5$ for LandingPC and \ours. Additionally, we use VAdam as base optimizer for \ours.
    
    \item Orthogonal kernels: learning rates of $0.01$ for RGD,  Adam and Landing (this time with no momentum), $0.5$ for RSDM (with a submanifold dimension of $2$ out of $3$), $0.1$ for SLPG, and $0.5$ again for LandingPC and \ours (using VAdam as base optimizer).
\end{enumerate}

\subsection{Squared Unitary Probabilistic Circuits}

We consider probabilistic circuits (PCs) \citep{choi2020pc,vergari2021compositional}
that are representing a complex wave function and then a probability distribution once squared \citep{loconte2024subtractive,loconte2025sum}.
For our experiments, these squared PCs  are parameterized with complex unitary matrices.
We could not find any available code online for \citet{loconte2025square}. After contacting the authors through email, they provided access to their codebase to reproduce the experiments in their work. Therefore, we refer the readers to the experimental details within \citep{loconte2025square} for details on how to reproduce those experiments, as we did not change anything other than plugging in our baseline orthoptimizers.

In particular, the MNIST experiment we show in this article corresponds to the squared unitary probabilistic circuit with Kronecker products and $10$ units. The only difference with respect to their experimental setting is the halving of learning rate after finding a plateau, which we noticed it improved the results of all baselines.

Regarding the hyperparameter used for the optimizers, these are the values that we found to work the best given an equal budget between methods: learning rate of $0.05$ for RGD and LandingPC (the latter with a value $\lambda$ of $0.1$ in addition), $0.01$ for Landing and $0.5$ for \ours using VAdam as base optimizer. With respect to SLPG, we found that our considered learning rates lead to the model diverging within the first epoch, and so we found the best learning rate to be $0.0005$ ($0.0075$ already diverged). In a similar note, we could not make work RSDM with a submanifold size of $8$ for every value of learning rate that we tried: While the method did run, its performance was subpar to the rest of methods, not even lying within the range of the plots shown in the main paper.

\subsection{Ablation on Tensor Precision}
\label{app:subsec:ablation-pca}

In order to understand the reason why RSDM was monotonically increasing the distance of its iterates from the manifold, we decided to run a small experiment on the online PCA setting.

In this experiment, we took a subset of orthoptimizers to have as a reference for RSDM, and then reproduce the experiments from the main paper with two different settings. In the first, we activate the use of 16-bit precision floating points to compute matrix multiplications (still storing them as 32-bit precision numbers). This option sacrifices accuracy in the matrix multiplication by significant gains in running times. The second experiment instead ran all the experiment with 64-bit precision floating points, which significantly slowed the experiments in exchange of more accurate computations. It is important to remark that to track the results (optimality gap and manifold distance) we went back to the original setting at each iteration.

The results are summarized in in \cref{app:fig:ablation-pca-procrusts}, where we can make a couple of observations. Regarding training time, we see that \ours and Landing benefit the most from speed-ups in matrix multiplication and, conversely, all methods significantly increased training times (from $20$ to $500$ seconds) when using 64-bit floating point precision.
Regarding manifold distances, we first observe that all considered methods converge to the manifold for the highest precision, \textit{including RSDM}. This points to numerical problems within the computations of RSDM. Similarly, we see that the use of faster matrix multiplication come at the cost of less precision on the feasibility of the solutions, with all distances increasing order several orders of magnitude with respect to those results from \cref{fig:pca-procrustes}.

\begin{figure}[t]
    \centering
    \includegraphics[width=0.5\linewidth]{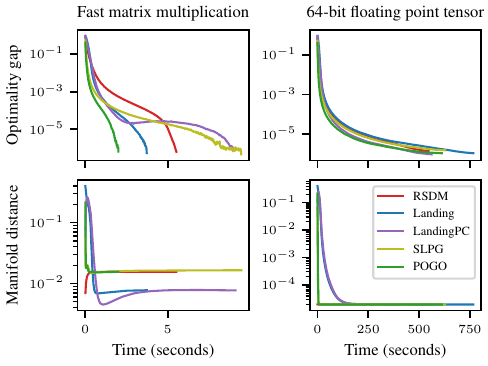}
    \caption{Ablation on online PCA with a subset of methods exploring the trade-off between multiplication speed and numerical precision.}
    \label{app:fig:ablation-pca-procrusts}
\end{figure}

\subsection{Ablation on Computing the Step Size}
\label{app:subsec:ablation}

In order to clearly show the advantage that \ours has from leveraging unconstrained optimizers, as well as to empirically demonstrate the theoretical results discussed in \cref{subsec:compute-lambda,subsec:approximate-lambda}, we ran an ablation study on this squared unitary PC experiment (\cref{subsec:osos}), where we take \ours with no base optimizer ($\mG = \nabla f(\mX)$) and vary the learning rate for $\eta\in\{0.001, 0.005, 0.007, 0.010,0.025\}$ for the case where we compute $\lambda$ by solving the landing polynomial (see \cref{subsec:compute-lambda}) or by setting $\lambda$ to $1/2$\eqp{.}

We present the results for each of these two options in \cref{fig:osos-ablation}, where we also plot for reference \ours with VAdam as base optimizer (the one selected for the experiments in \cref{subsec:osos}).
First, we observe that the version with VAdam obtains the best bpd results than any of the other hyperparameter settings, showing the competitive advantage that unconstrained optimizers can provide to \ours. 
Second, we see that when we compute $\lambda$ and increase the learning rate, \ours goes from staying in the manifold all the time to fluctuate more, to the point where it directly escapes from the manifold at $\eta = 0.025$. 
For the case where we fix $\lambda$ to $0.5$, we see that only two learning rates appear in \cref{fig:osos-ablation} (right). The reason is that every other version not appearing in the plot diverged within the first epoch, showing that $\lambda=0.5$ is indeed an approximation that works when we keep $\xi$ under control, and also that if we compute $\lambda$ we can use higher learning rates. 

Finally, we show in \cref{fig:osos-ablation2} the same models, where we put together the four versions of \ours with SGD sharing learning rate. In this figure, we can observe that there is no difference at all (besides training time) between fixing $\lambda$ or computing the root for the smallest learning rate (although the downstream performance is worse). We see a similar trend for $\lambda = 0.005$ for the downstream performance, where both versions of \ours are indistinguishable. However, we observe that in the distance to the manifold fixing $\lambda$ seems to converge faster. We foresee two possible explanations. First, it could be that our choice for picking the root is not the best one. Second, it is possible that at the beginning of training $\lambda = 0.5$ overshoots the correct root, alternating between both orientations of the manifold at each iteration.

However, these results empirically show that we can control how tight \ours stay on the manifold by reducing the learning rate. In practice, and as we show every time we use VAdam, the gradient normalization that it performs internally helps us to adaptively control $\norm{\mG}$, staying always really close to the manifold and allowing for larger learning rates.

\begin{figure}[t]
    \centering
    \begin{minipage}[t]{.5\linewidth}
        \centering
        \includegraphics[width=\linewidth]{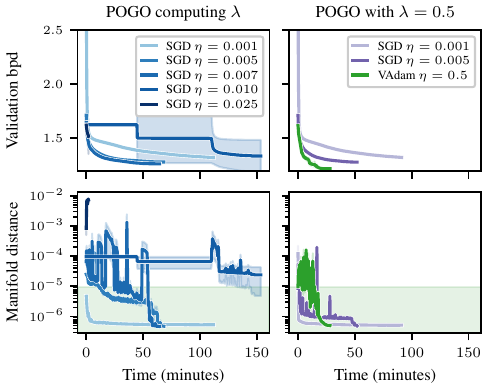}
        \caption{Ablation of \ours on the PC experiment as we change the learning rate with no base optimizer. Left plots solve the landing polynomial to compute $\lambda$ and right plots fix $\lambda$ to $1/2$. \Ours with VAdam \citep{ling2022vectoradam} is added as a reference.} \label{fig:osos-ablation}
    \end{minipage}%
    \hspace{2em}%
    \begin{minipage}[t]{.425\linewidth}
        \centering
        \includegraphics[width=.9\linewidth]{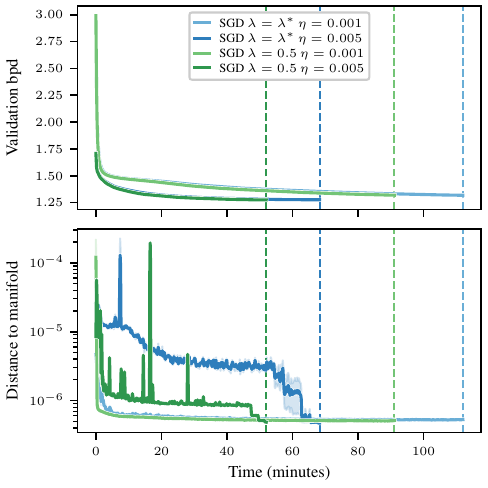}
        \caption{Same ablation as in \cref{fig:osos-ablation}, plotting together \ours solving the landing and polynomial and \ours with $\lambda = 1/2$ for $\eta = 0.001$ and $\eta = 0.005$. We see that both approaches are identical for the smallest step size, while they differ at the largest one, likely from $\lambda = 1/2$ over/under-estimating the value of $\lambda^*$.} \label{fig:osos-ablation2}
    \end{minipage}%
\end{figure}

\end{document}